\newtheorem{prop}{Proposition}
\newtheorem{lemma}{Lemma}
\newtheorem{corollary}{Corollary}
\newtheorem{definition}{Definition}
\title{Maximum Entropy Weighted Independent Set Pooling for Graph Neural Networks}
\author{%
  Amirhossein Nouranizadeh\thanks{Both authors have equally contributed.} \\
  Department of Computer Engineering\\
  Amirkabir University of Technology\\
  Tehran, Iran \\
  \texttt{nouranizadeh@aut.ac.ir} \\
  % examples of more authors
  \And
  Mohammadjavad Matinkia\footnotemark[1] \\
  Department of Computer Engineering\\
  Amirkabir University of Technology\\
  Tehran, Iran \\
  \texttt{matinkia@aut.ac.ir} \\
  \And
  Mohammad Rahmati\\
  Department of Computer Engineering\\
  Amirkabir University of Technology\\
  Tehran, Iran \\
  \texttt{rahmati@aut.ac.ir} \\
  \And
  Reza Safabakhsh\\
%  \thanks{Use footnote for providing further information
%  	about author (webpage, alternative address)---\emph{not} for acknowledging
%  	funding agencies.} 
  Department of Computer Engineering\\
  Amirkabir University of Technology\\
  Tehran, Iran \\
  \texttt{safa@aut.ac.ir} \\
}
\begin{document}

\maketitle

\begin{abstract}
  In this paper, we propose a novel pooling layer for graph neural networks based on maximizing the mutual information between the pooled graph and the input graph. Since the maximum mutual information is difficult to compute, we employ the \textit{Shannon capacity of a graph} as an inductive bias to our pooling method.
  More precisely, we show that the input graph to the pooling layer can be viewed as a representation of a noisy communication channel.
  For such a channel, sending the symbols belonging to an independent set of the graph yields a reliable and error-free transmission of information.  
  We show that reaching the maximum mutual information is equivalent to finding a maximum weight independent set of the graph where the weights convey entropy contents.
  Through this communication theoretic standpoint, we provide a distinct perspective for posing the problem of graph pooling as maximizing the information transmission rate across a noisy communication channel, implemented by a graph neural network.
  We evaluate our method, referred to as Maximum Entropy Weighted Independent Set Pooling (MEWISPool), on graph classification tasks and the combinatorial optimization problem of the maximum independent set. Empirical results demonstrate that our method achieves the state-of-the-art and competitive results on graph classification tasks and the maximum independent set problem in several benchmark datasets.
\end{abstract}

\section{Introduction}
\label{sec:intro}
Graph neural networks (GNN) provide an efficacious tool for representation learning on graph-structured data. GNNs were initially developed to extend the original convolutional networks to graphs. However, these classes of neural networks were gradually enhanced both operationally and theoretically, for instance by leveraging attention and gating mechanisms\cite{benchmarking}.

The theoretical and practical advantages of hierarchical representation learning initiate the urge for pooling methods in graph domain. In contrast with the image domain where the grid structure of the data can be exploited to simply design a pooling strategy, the irregularity of the graph-structured data poses further complications for the pooling task. Analogous to the pooling operator in convolutional neural networks, increasing the receptive fields of the computational neurons to capture global dependencies and attenuating the superfluous information, comprise the primary motivations for adopting pooling layers in GNNs. Various pooling methods aim to encapsulate the node-level statistics of the graph into a possibly smaller yet informative graph while preserving the structural content \cite{hamilton}.

In this paper we propose a novel pooling layer referred to as \textit{Maximum Entropy Weighted Independent Set Pooling} (MEWISPool) which can be incorporated into GNNs in an end-to-end manner to realize effective hierarchical representations. The overarching objective of MEWISPool is to select a proper subset of nodes of the input graph which has the highest mutual information with the primary graph. We fulfill this goal using communication and information theoretic concepts. 

As a key element in the proposed pooling method, Shannon capacity of a graph measures the Shannon capacity of a noisy communication channel represented by the graph \cite{shannon, lovasz}. This graph is called the \textit{confusability graph} of the channel \cite{confusability}, where the nodes represent the symbols which are to be transmitted through the channel and the edges indicate the symbols which might be confused at the output due to the presence of noise. On the other hand, Shannon capacity of a noisy channel is also defined as the maximum information rate that can be transmitted through the channel and is given by the maximum mutual information between the channel's input and output \cite{elements}.

Through this communication theoretic standpoint, we provide a distinct perspective for posing the problem of pooling in GNNs as maximizing the information transmission rate across a noisy communication channel, which is implemented by a GNN. In fact, this GNN is a deterministic realization of the noisy channel at its highest information transmission rate. Consequently, MEWISPool makes the assumption that the input graphs to the pooling layer manifest representations of noisy communication channels and selects the nodes contributing to the Shannon capacity of such channels. This assumption is justified by the recent studies on the over-smoothing behavior of GNNs, which demonstrate that such networks act as low-pass filters and hence, result in smooth signals over the graph \cite{oversmoothing, oversmoothing2, pairnorm, lowpass, fastgcn}. This implies that the neighboring nodes carry similar signals due to the smoothness of the signals over the graph. Thus, the input graph to the pooling layer can be viewed as a confusability graph where the neighboring nodes contain similar informational contents. Furthermore, given a confusability graph of a noisy communication channel, sending the symbols belonging to an independent set of the graph yields a reliable and error-free transmission of information. \cite{shannon}. 

In this work, we employ the intuitions discussed above and strive to find an independent set of nodes whose signals have the maximum mutual information with the entire set of nodes signals. In order to circumvent the difficulties of solving the maximum mutual information, we utilize the \textit{Infomax principle} \cite{haykin, linkster, infomax} to transform the problem into maximizing the entropy of output signals given that their corresponding nodes form an independent set. Finally, we show that this problem is equivalent to finding a maximum weight independent set (MWIS) of the graph where the weights convey entropy contents. Solving the aforementioned MWIS problem maximizes the information transmission rate which is equal to the maximum mutual information between the input and output, and realizes the Shannon capacity of the input graph.

As MWIS is considered an NP-hard problem, finding the exact solution of the problem is intractable. However, we present a neural estimator to assign a probability to each node to be included in the optimal solution. The sub-optimal solution is then excluded from the learned distribution using derandomization algorithms. We verify that the existence of such a solution is guaranteed using \textit{the probabilistic method} \cite{erdos1959graph, probabilisticmethod}. Since MEWISPool samples the vertices following the sub-optimal solution of the MWIS problem, it does not require a pooling ratio which in turn, makes it adaptive to the graph structure and easier to fine-tune in end-to-end configurations. We evaluate the proposed method on graph classification tasks and the maximum independent set (MIS) problem. 

The contributions of the present work can be summarized as, (i) devising a novel objective for pooling, based on the concepts of Shannon capacity of graphs and Infomax principle, (ii) proposing a neural execution of the combinatorial optimization (CO) problem of MWIS, (iii) introducing a pooling layer, MEWISPool, with structure-adaptive pooling ratio, and (iv) demonstrating the state-of-the-art and competitive results on graph classification tasks and the MIS problem\footnote{\url{https://github.com/mewispool/mewispool}}.

\section{Related Works}
\label{sec:relwork}
\textbf{GNNs and pooling techniques.} In recent years, there has been a tremendous attention on GNNs due to their achievements in learning tasks on graph-structured data in various domains such as social sciences\cite{gcn}, bioinformatics\cite{messagepassing}, physics \cite{physics2}, recommendation systems\cite{recom2}, etc. GNNs utilize graph-based pooling techniques to learn hierarchical representations of the input graphs. 
In a general perspective, one can categorize the graph pooling techniques into the methods which are based on node selection \cite{gunet, sagpool, gxn, tagss}, and the methods based on graph coarsening \cite{diffpool, mincutpool, ecc, ipool, structpool}. The node-selection-based pooling methods assign an importance score to each node of the graph and select the high score nodes as the pooled nodes. On the other hand, coarsening-based methods tend to cluster the graph and merge each cluster into one node. 

Among the pooling methods, iPool \cite{ipool} and VIPool \cite{gxn}, utilize information theoretic concepts to select candidate nodes. More precisely, iPool defines an information gain criterion to quantitatively measure the conditional entropy of each node given its neighbors and selects the nodes with highest information gains. Alternatively, VIPool computes the mutual information between the neighboring nodes and selects the nodes which have the highest mutual information with their neighbors. Instead of considering the informational relationship between a node and its neighbors, MEWISPool tends to maximize the mutual information between the pooled nodes and the entire set of input vertices. Moreover, using the maximum entropy weighted independent set, MEWISPool guarantees a fair coverage of the graph, whereas iPool and VIPool might select the nodes only from a locality of the graph where the informational criteria are satisfied.

\textbf{Mutual information maximization.}
Recently, several research has been focused on mutual information estimation. MINE \cite{MINE} develops a general framework for maximization of mutual information, and introduces a consistent method for unsupervised representation learning. Similarly, \cite{DIM} introduces Deep InfoMax (DIM), for unsupervised representation learning while putting more focus on the intrinsic structure of image data. \cite{DGInfo} proposes the Deep Graph Infomax (DGI) which takes the graph-structured data into account and adopts the ideas from DIM to the graph domain. Also, InfoGraph \cite{infograph} maximizes the mutual information between the graph-level representation and the representations of substructures of the input graph.

In contrast to VIPool, which is heavily built upon the method of MINE, MEWISPool maximizes the mutual information with a completely different method by explicitly incorporating the structure of the input graphs. Indeed, MEWISPool takes this fact into account that the structure of the graph specifies how the mutual information between the pooled nodes and original nodes can be maximized. In other words, the MEWISPool's perspective to view the input graphs as the representations of noisy communication channels, enables it to directly exploit the structure of the input graphs to maximize the information transmitted through the pooling layer.

\textbf{Over-smoothing in GNNs.}
There are numerous reports emphasizing on the characteristic of GNNs in \textit{over-smoothing} the features, which results in gradual decrease in the performance of GNNs by increasing the number of layers \cite{oversmoothing, oversmoothing2}. This decay is partly attributed to over-smoothing, where repeated graph convolutions eventually make node embeddings indistinguishable \cite{pairnorm}.
\cite{lowpass} develops a theoretical framework based on graph signal processing to indicate that GNNs only perform low-pass filtering on feature vectors and do not have the nonlinear manifold learning property. Similarly, \cite{fastgcn} demonstrates that such networks correspond to fixed low-pass filters. Based on this concept, MEWISPool views the input graphs as the confusability graphs of noisy communication channels. The definition of the confusability graph will be provided in the supplementary materials.

\textbf{Neural execution of combinatorial optimization problems.}
Most of the CO problems including the MWIS problem, are considered to be NP-hard. However, the study of such problems from a neural perspective has become an engaging area of research \cite{co3, co4, co6, co5, co1, co2}. S2V-DQN \cite{dqn} combines reinforcement learning and graph embedding techniques to solve CO problems using deep Q-learning. \cite{guidedtree} utilizes GNNs in combination with guided tree search, to solve certain CO problems on graphs in a supervised setting. Further examples for neural execution of CO problems can be found in \cite{horizon}. More recently, \cite{erdos} has proposed a global framework for combinatorial optimization on graphs in unsupervised settings. Their framework uses GNNs to learn distributions on the nodes of the graphs. They use the probabilistic method to guarantee the existence of a valid solution which is then extracted using the derandomization method of conditional expectation. As MEWISPool relies on extracting the MWIS of a graph, we adopt the intuition behind the work of \cite{erdos} to provide an approximate solution to the problem via GNNs.

\section{Proposed Method}
\label{sec:prop_met}
In a general perspective, MEWISPool tends to pool a subset of nodes whose signals have the maximum mutual information with the entire set of input nodes signals while taking the structure of the input graph into account. Since the input graph signals to the pooling layer are the outputs of previous graph convolutional layers, due to the over-smoothing behavior of GNNs  \cite{lowpass, fastgcn, oversmoothing, oversmoothing2, pairnorm}, such signals mostly contain low-frequency components and exhibit smoothness in their values over the structure of the graph. Hence, the adjacent nodes signals might be confused due to low variation between their signal contents. 

Given the smoothness of neighboring signals, one can view the input graph as a noisy communication channel whose nodes signals are the messages which are to be transmitted and are prone to confusion due to the presence of noise. In this graph, the edges denote the pairs of messages which might be confused through transmission. 
The resulting graph is called the \textit{confusability graph} \cite{confusability} of the channel. In this perspective, sending the messages over an independent set of the graph yields a reliable and error-free transmission of messages \cite{shannon}. MEWISPool exploits this intuition and provides a neural implementation of the pooling layer as a reliable transmission of information across the channel while maximizing the mutual information. Here, we give a formal definition of MEWISPool's functionality. Note that the proofs of propositions \ref{prop1}, \ref{prop2}, \ref{prop4}, and \ref{prop5} are given in the supplementary materials. Also, the preliminaries on MIS and MWIS are presented in the supplementary materials.

Let $\mathcal{G}=(\mathcal{V}, \mathcal{E})$ be an undirected simple graph, where $\mathcal{V}$ denotes the set of vertices, and $\mathcal{E}$ denotes the set of edges, and let $\mathbf{X} \in \mathbb{R}^{|\mathcal{V}|\times d}$ be the feature matrix of the nodes of the graph. The objective of MEWISPool is to find an optimal subset of nodes, $\mathcal{U}^* \subset \mathcal{V}$, satisfying the following optimization problem:
\begin{figure}[!t]
	\vskip 0.2in
	\begin{center}
		\centerline{\includegraphics[width=\columnwidth]{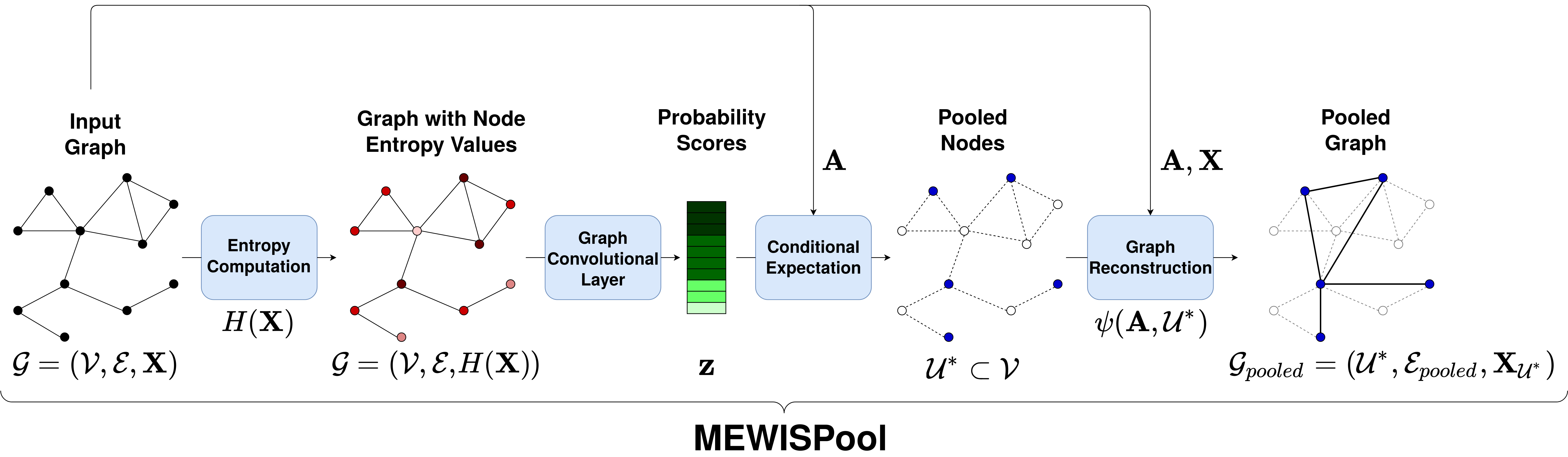}}
		\caption{\scriptsize Maximum Entropy Weighted Independent Set Pooling (MEWISPool) diagram. As the diagram shows, the input graph and its corresponding node features $\mathbf{X}$ are used to compute the entropy values of the nodes, $H(\mathbf{X})$ based on Eq.(\ref{eq:entropy}). Then the graph with node entropies as node features is fed to a GNN to generate a probability score vector $\mathbf{z}$, each of which elements represent the degree of membership of a node to the MWIS. The conditional expectation algorithm (Algorithm \ref{tab:1}) is then applied to the probability score along with the adjacency matrix of the graph to extract the maximum entropy weighted independent set of the graph, $\mathcal{U}^*$. Finally, the candidate nodes $\mathcal{U}^*$ and the adjacency matrix are utilized to reconstruct the pooled graph according to Eq.(\ref{eq:adj_pooled}). The feature vector of each node in the pooled graph is set the same as the feature vector of the same node in the input graph.}
		\label{fig: mewispool}
	\end{center}
	\vskip -0.2in
\end{figure}
\begin{maxi}|l|
	{\mathcal{U}}{\mathcal{I}(\mathbf{X};\mathbf{X}_\mathcal{U})}{}{}
	\label{eq:mewis1}
	\addConstraint{(i, j) \notin \mathcal{E}}{,}{\forall i, j \in \mathcal{U}},
\end{maxi}

where $\mathcal{I}(\mathbf{X};\mathbf{X}_\mathcal{U})$ is the mutual information between $\mathbf{X}$ and $\mathbf{X}_\mathcal{U}$, the signals over the subset of nodes $\mathcal{U} \subset \mathcal{V}$. Also, $i$ and $j$ denote the nodes of the graph and $(i, j)$ denotes an edge between the nodes $i$ and $j$. The mutual information between two random variables $X$ and $Y$ with the distributions $p_X$ and $p_Y$ is defined as the Kullback-Leibler divergence between their joint distribution and the product of their marginal distributions, i.e., $\mathcal{I}(X;Y) = D_{KL}(p_{(X, Y)}\| p_X \otimes p_Y)$. The solution of the optimization problem \ref{eq:mewis1} represents a subset $\mathcal{U}^*$ of nodes whose corresponding signals have the maximum mutual information with the set of the signals on the entire graph, while the resulting selected nodes are mutually disconnected. In this work, we aim to find the optimal solution $\mathcal{U}^*$ using GNNs.

In order to circumvent the difficulties of solving problem \ref{eq:mewis1}, we reformulate it to
\begin{maxi}|l|
	{\mathcal{U}}{H(\mathbf{X}_\mathcal{U})}{}{}
	\label{eq:mewis2}
	\addConstraint{(i, j) \notin \mathcal{E}}{,}{\forall i, j \in \mathcal{U}},
\end{maxi}
where $H(\mathbf{X}_\mathcal{U})$ is the entropy of the signals over the nodes of $\mathcal{U}$.\smallskip

\begin{prop}\label{prop1}
	Let $\mathbf{X}$ be a set of random variables and $\mathbf{X}_\mathcal{U}$ be a subset of $\mathbf{X}$ denoted by the set of indices $\mathcal{U}$. Let $\phi$ be a neural network, mapping $\mathbf{X}$ to $\mathbf{X}_\mathcal{U}$. Then maximizing the mutual information $\mathcal{I}(\mathbf{X};\mathbf{X}_\mathcal{U})$ is equivalent to maximizing the joint entropy $H(\mathbf{X}_\mathcal{U})$.
\end{prop}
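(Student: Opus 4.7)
The plan is to invoke the standard information-theoretic identity
\[
\mathcal{I}(\mathbf{X};\mathbf{X}_\mathcal{U}) \;=\; H(\mathbf{X}_\mathcal{U}) \;-\; H(\mathbf{X}_\mathcal{U}\mid \mathbf{X}),
\]
which follows directly from the definition of mutual information as a Kullback--Leibler divergence already stated in the paper. The first step is simply to record this decomposition, so that the objective in \ref{eq:mewis1} is rewritten as a difference of an unconditional and a conditional entropy.

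The second, and essentially the only substantive, step is to argue that $H(\mathbf{X}_\mathcal{U}\mid \mathbf{X}) = 0$. Since $\phi$ is a neural network with fixed parameters, the mapping $\mathbf{X}\mapsto \mathbf{X}_\mathcal{U} = \phi(\mathbf{X})$ is deterministic. Hence, conditioned on $\mathbf{X} = x$, the distribution of $\mathbf{X}_\mathcal{U}$ is the point mass at $\phi(x)$, which carries zero entropy; averaging over $\mathbf{X}$ preserves this, so $H(\mathbf{X}_\mathcal{U}\mid \mathbf{X}) = 0$. (In the indexing interpretation where $\mathbf{X}_\mathcal{U}$ is literally a coordinate subset of $\mathbf{X}$, the same conclusion is immediate because the coordinates are known with certainty once $\mathbf{X}$ is observed.)

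Substituting back gives $\mathcal{I}(\mathbf{X};\mathbf{X}_\mathcal{U}) = H(\mathbf{X}_\mathcal{U})$, so the two objectives differ by the $\mathcal{U}$-independent constant $0$, and consequently \ref{eq:mewis1} and \ref{eq:mewis2} share the same optimal sets $\mathcal{U}^\ast$. This is precisely the Infomax principle in its simplest form: when the channel input-to-output map is deterministic, maximizing input-output mutual information reduces to maximizing output entropy.

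The only real obstacle I anticipate is a measure-theoretic subtlety in the continuous case. If one interprets the entropies as differential entropies, then $h(\phi(\mathbf{X})\mid \mathbf{X})$ is formally $-\infty$ for a non-trivial smooth deterministic $\phi$, and the bare identity above becomes ill-posed. A careful treatment would either restrict to the discrete regime (consistent with how the paper subsequently assigns each node a scalar entropy weight used in the MWIS formulation), appeal to a quantization-then-limit argument, or introduce an infinitesimal observation noise. Any of these routes recovers the equivalence up to a $\mathcal{U}$-independent term, which is all the Infomax reduction requires.
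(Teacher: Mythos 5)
Your proposal is correct and follows essentially the same route as the paper: decompose $\mathcal{I}(\mathbf{X};\mathbf{X}_\mathcal{U}) = H(\mathbf{X}_\mathcal{U}) - H(\mathbf{X}_\mathcal{U}\mid\mathbf{X})$ and use the determinism of $\phi$ to kill the conditional term. Your caveat about the continuous case is well placed; the paper resolves it the same way you sketch, by noting that the (divergent) conditional differential entropy $h(\mathbf{X}_\mathcal{U}\mid\mathbf{X})$ does not depend on the network parameters, so the gradients of the mutual information and of $h(\mathbf{X}_\mathcal{U})$ coincide.
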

%\begin{prop}
%	Let $\mathcal{G} = (\mathcal{V}, \mathcal{E})$ be a simple graph and $\mathbf{X}$ be a set of random variables assigned to the nodes of $\mathcal{G}$. Let $\mathbf{X}_\mathcal{U}$ indicate the set of random variables on a subset of nodes $\mathcal{U} \subset \mathcal{V}$. Denote by $h(\mathbf{X}_\mathcal{U})$ the joint entropy of $\mathbf{X}_\mathcal{U}$ and denote by $\mathcal{I}(\mathbf{X};\mathbf{X}_\mathcal{U})$ the mutual information between $\mathbf{X}$ and $\mathbf{X}_\mathcal{U}$. Then, the solution of problem \ref{eq:mewis2} is a solution of problem \ref{eq:mewis1}. 
%\end{prop}
%\begin{proof}
%	see supplementary materials.
%\end{proof}
Formulation of problem \ref{eq:mewis2} is based on the Infomax principle \cite{linkster, haykin, infomax} which states that in order to maximize the mutual information between the input and output of a neural network, one can maximize the entropy of the neural network's output. This formulation translates to finding an independent set of the graph with maximum joint entropy. Further, we show that the problem \ref{eq:mewis2} can be reduced to
\begin{maxi}|l|
	{\mathcal{U}}{\sum_{i \in \mathcal{U}} H(\mathbf{X}_i)}{}{}
	\label{eq:mewis3}
	\addConstraint{(i, j) \notin \mathcal{E}}{,}{\forall i, j \in \mathcal{U}}.
\end{maxi}

\begin{prop}\label{prop2}
	Let $\mathcal{G} = (\mathcal{V}, \mathcal{E})$ be an undirected simple graph and $\mathbf{X}$ be the set of random variables assigned to the nodes of $\mathcal{G}$. Also, let $g$ be a probability distribution associated to $\mathbf{X}$ such that for any $(i, j) \notin \mathcal{E}$, it follows that $\mathbf{X}_i$ and $\mathbf{X}_j$ are independent. Then, problems \ref{eq:mewis2} and \ref{eq:mewis3} are equivalent.
\end{prop}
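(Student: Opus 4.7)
The plan is to observe that problems \ref{eq:mewis2} and \ref{eq:mewis3} share identical feasible sets, namely the collection of independent sets of $\mathcal{G}$. Consequently, establishing their equivalence reduces to showing that the two objective functionals coincide on this common feasible set, i.e., $H(\mathbf{X}_\mathcal{U}) = \sum_{i \in \mathcal{U}} H(\mathbf{X}_i)$ whenever $\mathcal{U}$ is an independent set. Once this pointwise identity is in place, the two problems have the same optimal value and the same set of optimizers.

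First I would fix an arbitrary feasible $\mathcal{U}$, so that by the constraint $(i,j) \notin \mathcal{E}$ for every pair of distinct $i,j \in \mathcal{U}$. The hypothesis on $g$ then says that every such pair $\mathbf{X}_i, \mathbf{X}_j$ is independent. Next I would invoke the chain rule of entropy to write
\begin{equation*}
H(\mathbf{X}_\mathcal{U}) = \sum_{k=1}^{|\mathcal{U}|} H\bigl(\mathbf{X}_{i_k} \,\big|\, \mathbf{X}_{i_{k-1}}, \ldots, \mathbf{X}_{i_1}\bigr)
\end{equation*}
after fixing any ordering $i_1, \ldots, i_{|\mathcal{U}|}$ of $\mathcal{U}$. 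Dropping each conditioning term by independence of the variables indexed by $\mathcal{U}$ then yields $H(\mathbf{X}_\mathcal{U}) = \sum_{i \in \mathcal{U}} H(\mathbf{X}_i)$, matching the objective of problem \ref{eq:mewis3} at $\mathcal{U}$.

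The main obstacle is a somewhat subtle reading of the hypothesis: as literally stated, the assumption only guarantees \emph{pairwise} independence across non-edges of $\mathcal{G}$, whereas the chain-rule step above needs \emph{mutual} (joint) independence of the entire collection $\{\mathbf{X}_i : i \in \mathcal{U}\}$. To bridge this gap I would interpret the hypothesis in its natural stronger form, namely that $g$ factorizes as $g(\mathbf{X}_\mathcal{U}) = \prod_{i \in \mathcal{U}} g(\mathbf{X}_i)$ for every independent set $\mathcal{U}$; equivalently, that $g$ is a product distribution over any anti-clique of $\mathcal{G}$. This reading is consistent with the pairwise wording when $g$ is a Markov random field whose only interactions live on edges of $\mathcal{G}$, which is the natural modeling assumption behind viewing $\mathcal{G}$ as a confusability graph. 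Under this clarification the chain-rule argument goes through verbatim, and what remains is essentially a one-line application of the additivity of entropy under independence.
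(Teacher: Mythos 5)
Your proposal follows essentially the same route as the paper's proof: both reduce the claim to the identity $H(\mathbf{X}_\mathcal{U}) = \sum_{i \in \mathcal{U}} H(\mathbf{X}_i)$ on the common feasible set of independent sets, established via the entropy chain rule and additivity under independence. The one place you go beyond the paper is in explicitly flagging that the hypothesis as written only gives \emph{pairwise} independence while the chain-rule step needs \emph{mutual} independence of $\{\mathbf{X}_i : i \in \mathcal{U}\}$ --- a gap the paper's proof silently glosses over but which your Markov-random-field reading (consistent with the paper's own Proposition 4) patches correctly.
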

%\begin{proof}
%	see supplementary materials.
%\end{proof}

\begin{prop}\label{prop3}
	The problem \ref{eq:mewis3} is the definition of the maximum weight independent set problem where the weight of node $i$ is defined as $H(\mathbf{X}_i)$.
\end{prop}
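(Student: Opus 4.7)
The plan is to verify the claim directly by matching definitions: problem \ref{eq:mewis3} is essentially an instance of MWIS once the correct weighting is identified. I would start by recalling the standard formulation of MWIS: given an undirected simple graph $\mathcal{G} = (\mathcal{V}, \mathcal{E})$ equipped with nonnegative vertex weights $w : \mathcal{V} \to \mathbb{R}_{\geq 0}$, the problem asks for a subset $\mathcal{U}^* \subseteq \mathcal{V}$ maximizing $\sum_{i \in \mathcal{U}} w_i$ subject to the constraint that $\mathcal{U}$ is an independent set of $\mathcal{G}$, i.e., no two vertices in $\mathcal{U}$ are joined by an edge.

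Next, I would set $w_i := H(\mathbf{X}_i)$ for every $i \in \mathcal{V}$ and check that this produces a valid MWIS instance. With this identification, the objective $\sum_{i \in \mathcal{U}} H(\mathbf{X}_i)$ of problem \ref{eq:mewis3} becomes $\sum_{i \in \mathcal{U}} w_i$ term-by-term. The constraint $(i, j) \notin \mathcal{E}$ for all $i, j \in \mathcal{U}$ is, by definition, the statement that $\mathcal{U}$ is an independent set in $\mathcal{G}$. Hence both the objective and the feasible set of problem \ref{eq:mewis3} coincide with those of the MWIS instance on $\mathcal{G}$ with weights $H(\mathbf{X}_i)$, which yields the claimed equivalence.

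The only substantive checkpoint is confirming that $H(\mathbf{X}_i)$ is admissible as a vertex weight. In the discrete setting adopted throughout Section \ref{sec:prop_met}, Shannon entropy is nonnegative, so $w_i = H(\mathbf{X}_i) \geq 0$ poses no issue; for continuous random variables one could instead work with differential entropy after a standard shift, since the argmax of the optimization is invariant under adding a common constant to every vertex weight. Beyond this minor bookkeeping the argument is purely a restatement of definitions, so I do not anticipate any hard step; the main thing to be careful about is simply to state the MWIS definition in a form whose ingredients line up cleanly with problem \ref{eq:mewis3}.
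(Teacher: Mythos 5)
Your proposal is correct and matches the paper's proof, which likewise just identifies the objective $\sum_{i\in\mathcal{U}} H(\mathbf{X}_i)$ and the independence constraint with the standard MWIS formulation under the weighting $w_i = H(\mathbf{X}_i)$. Your extra remark on nonnegativity of the weights is harmless bookkeeping (the node entropies from Eq.~(\ref{eq:entropy}) are nonnegative since $p(\mathbf{X}_i)\in(0,1)$) but does not change the argument.
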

\begin{proof}
	Problem \ref{eq:mewis3} states that we want to select as many nodes as possible to maximize the sum of nodes entropies while no two of the selected nodes are adjacent. Obviously, this is the definition of the MWIS problem where the weights are defined as $H(\mathbf{X}_i)$.
\end{proof}

For solving the problem \ref{eq:mewis3}, first we need to introduce the probability distribution $g$ over the nodes signals which satisfies the condition of proposition \ref{prop2}. To this end, we refer to the concept of over-smoothing in GNNs. Intuitively, since the graph signals are smooth with respect to the graph structure, the presence of a node signal with high variation with respect to its neighboring signals is less probable. Hence, we model the probability distribution of the nodes signals inversely proportional to the variations of nodes signals with respect to their neighboring signals. Mathematically speaking, for an undirected and unweighted simple graph $\mathcal{G}$, and its corresponding graph signal $\mathbf{x} \in \mathbb{R}^{|\mathcal{V}|}$ the \textit{local variation} $\delta_i$ at vertex $i$ is defined as,
\begin{gather}
	\delta_i = \left[\sum_{j \in \mathcal{N}_i}(x_j - x_i)^2\right]^\frac{1}{2},
\end{gather}
where $\mathcal{N}_i$ represents the neighbors of the node $i$ \cite{emergingfields}. The local variation provides a measure of the smoothness of the graph signal $\mathbf{x}$ around a vertex. For the case of $d$-dimensional graph signal $\mathbf{X} \in \mathbb{R}^{|\mathcal{V}|\times d}$, we calculate the local variations at each dimension, and take the $\textrm{L}_2$ norm of the local variations vector as the variation of each node:
\begin{gather}\label{eq:Delta}
	\Delta_i = \left[\sum_{k=1}^d(\delta_i^{(k)})^2\right]^\frac{1}{2},
\end{gather}
where $\delta_i^{(k)}$ is the local variation of the node $i$ at the dimension $k$. As mentioned before, the input graph signal to MEWISPool is a smooth graph signal, meaning that the local variation of each node is supposedly small. Hence, the occurrence of a node with relatively high local variation is less probable. We mathematically model this intuition as,
\begin{gather}
	\label{eq:var_prob}
	p(\mathbf{X}_i) \propto \exp(-\Delta_i),
\end{gather}
where $\mathbf{X}_i \in \mathbb{R}^d$ is the signal on the node $i$. Based on Eq.(\ref{eq:var_prob}) we propose a notion of node entropy by assigning a probability $p(\mathbf{X}_i)$ to the node $i$ according to
\begin{gather}\label{eq:softmax}
	p(\mathbf{X}_i) = \frac{\exp(-\Delta_i)}{\sum_{j=1}^{|\mathcal{V}|} \exp(-\Delta_j)},
\end{gather}
\begin{algorithm}[tb]
	\caption{Maximum Entropy Weighted Independent Set Extraction}
	\label{tab:1}
	\begin{algorithmic}[1]
		\REQUIRE Adjacency Matrix $\boldsymbol{A} \in \mathbb{R}^{|\mathcal{V}| \times |\mathcal{V}|}$, Probability Scores $\mathbf{z} = [z_1,\dots,z_{|\mathcal{V}|}]^T$, Fixed Parameter $\gamma = \sum_{i}H(\mathbf{X}_i)$, Fixed Threshold $T = \mathcal{L}_{\textrm{pool}}(\mathcal{G};\theta)$.
		\ENSURE Maximum Entropy Weighted Independent Set Approximation.
		\STATE \textbf{Initialize} \textit{selected} $ = \emptyset$, and \textit{rejected}  $ = \emptyset$
		\FOR{each node $v$ in descending order of probability scores $\mathbf{z}$}
		\IF{$v \not\in selected \cup rejected$}
		\STATE $\mathbf{s} \leftarrow \mathbf{z}$
		\STATE $\mathbf{s}_v \leftarrow 1$
		\STATE $\mathbf{s}_{\mathcal{N}_v} \leftarrow 0$
		\IF{$\gamma -  \sum_{i=1}^{|\mathcal{V}|} H(\mathbf{X}_i)s_i + \sum_{(i, j) \in \mathcal{E}}s_is_j \leq T$}
		\STATE $selected \leftarrow v$
		\STATE $rejected \leftarrow \mathcal{N}_v$
		\STATE $\mathbf{z}_v \leftarrow 1$
		\STATE $\mathbf{z}_{\mathcal{N}_v} \leftarrow 0$
		\ENDIF
		\ENDIF
		\ENDFOR
		\STATE \textbf{return} $selected$
	\end{algorithmic}
\end{algorithm}
which is the \textit{softmax} function of the vector $-\Delta = [-\Delta_1, \dots, -\Delta_{|\mathcal{V}|}]^T$. 
\begin{prop}\label{prop4}
	The probability distribution of Eq.(\ref{eq:softmax}) satisfies the condition of distribution $g$ of proposition \ref{prop2}.
\end{prop}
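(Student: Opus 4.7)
The plan is to verify that for any non-adjacent pair $(i,j)\notin\mathcal{E}$, the random variables $\mathbf{X}_i$ and $\mathbf{X}_j$ are independent under the distribution induced by Eq.~(\ref{eq:softmax}). The first step is structural: unwinding Eq.~(\ref{eq:Delta}) and the definition of $\delta_i^{(k)}$, I observe that every summand inside $\delta_i^{(k)}$ has the form $(X_l^{(k)}-X_i^{(k)})^2$ with $l\in\mathcal{N}_i$, so the local variation $\Delta_i$ is a function only of the signals on the closed neighborhood $\{i\}\cup\mathcal{N}_i$. This immediately gives the locality I will need for the rest of the argument.

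Next I would apply the hypothesis $(i,j)\notin\mathcal{E}$, which forces $j\notin\mathcal{N}_i$ and $i\notin\mathcal{N}_j$. Combined with the previous step, this means $\Delta_i$ carries no dependence on $\mathbf{X}_j$ and $\Delta_j$ carries none on $\mathbf{X}_i$. Hence the unnormalized factors $\exp(-\Delta_i)$ and $\exp(-\Delta_j)$ involve disjoint sets of variables as far as $\mathbf{X}_i$ and $\mathbf{X}_j$ are concerned. Interpreting the softmax in Eq.~(\ref{eq:softmax}) as the marginal at each node and taking $g$ to be the product of these marginals, the joint over the pair $(\mathbf{X}_i,\mathbf{X}_j)$ factors after marginalizing out every other node, yielding $g(\mathbf{X}_i,\mathbf{X}_j)=g(\mathbf{X}_i)\,g(\mathbf{X}_j)$, the independence required by Proposition~\ref{prop2}.

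The main obstacle I foresee is the global normalizer $Z=\sum_k \exp(-\Delta_k)$, which in principle couples every node through a single sum, and the possibility that common neighbors of $i$ and $j$ induce indirect coupling between $\mathbf{X}_i$ and $\mathbf{X}_j$ through a shared $\mathbf{X}_k$. To handle the former, I would treat $Z$ as a scalar normalizing constant that does not alter the factorization of the relative weights over disjoint variable sets. To handle the latter, I would note that common-neighbor signals are integrated out when computing the marginal over $(\mathbf{X}_i,\mathbf{X}_j)$, and the product-of-marginals reading of $g$ preserves the factorization through that marginalization. The cleanest packaging is to view the implied joint as a Markov random field with potentials supported on closed neighborhoods and to invoke its locality together with the factorization assumption on $g$ made in Proposition~\ref{prop2}, which is precisely what allows the entropy to split as in the passage from problem~\ref{eq:mewis2} to problem~\ref{eq:mewis3}.
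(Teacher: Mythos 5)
Your opening observation --- that $\Delta_i$ is a function only of the signals on the closed neighborhood $\{i\}\cup\mathcal{N}_i$, so that for $(i,j)\notin\mathcal{E}$ the factors $\exp(-\Delta_i)$ and $\exp(-\Delta_j)$ involve no shared dependence on $\mathbf{X}_j$ and $\mathbf{X}_i$ respectively --- is exactly the observation the paper builds on. But the paper's proof then takes a specific route you only gesture at in your last sentence: it proves a lemma stating that node probabilities of exponential form yield a joint distribution in Gibbs Random Field form, invokes the Hammersley--Clifford theorem to identify this with a Markov Random Field, and from the MRF structure concludes the independence of disconnected nodes. Your main argument instead rests on ``interpreting the softmax as the marginal at each node and taking $g$ to be the product of these marginals,'' and this is where the proposal breaks down.

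The gap is twofold. First, if $g$ is literally defined as the product of the per-node marginals, then \emph{every} pair of nodes is independent by construction, adjacent or not; the condition of Proposition~2 is satisfied vacuously, the graph structure plays no role, and nothing has been proven --- you have assumed the factorization you were asked to establish. Second, you correctly identify the real obstacle (a common neighbor $k$ of $i$ and $j$ appears in both $\Delta_i$ and $\Delta_j$ and can couple $\mathbf{X}_i$ and $\mathbf{X}_j$ after marginalization), but the proposed resolution does not overcome it: under the MRF reading, marginalizing out a shared neighbor generically \emph{induces} dependence between $\mathbf{X}_i$ and $\mathbf{X}_j$. The Markov property delivers conditional independence of non-adjacent nodes given a separating set, not marginal independence, so ``invoking locality'' is not enough to close the argument. (To be fair, the paper's own final step --- ``one can immediately conclude that two pairs of disconnected nodes are statistically independent'' from the MRF structure --- makes a similar leap, but it at least carries out the Gibbs/Hammersley--Clifford construction that is the substantive content of the proof; your proposal replaces that construction with a product-of-marginals assumption that trivializes the claim.)
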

Next, we derive the node entropy as
\begin{gather}
	\label{eq:entropy}
	H(\mathbf{X}_i) = -p(\mathbf{X}_i)\log(p(\mathbf{X}_i)).
\end{gather}
Based on proposition \ref{prop3}, using GNNs, MEWISPool approximately solves the MWIS problem with weights as defined in Eq.(\ref{eq:entropy}). To derive the appropriate loss function for MEWISPool, we adopt the framework proposed in \cite{erdos}, which is based on the \textit{probabilistic method} \cite{erdos1959graph, probabilisticmethod}. More precisely, MEWISPool assigns probability scores to the nodes of a graph which represent the degree of membership of the nodes in the MWIS. Further, MEWISPool efficiently extracts the maximum entropy weighted independent set according to the computed probability scores and using the \textit{method of conditional expectation} \cite{raghavan}. Here, we only present the derived loss function and algorithm, and we elaborate the details in supplementary materials. MEWISPool utilizes a GNN which takes a graph and its corresponding nodes entropy weights as inputs and is trained by minimizing the following loss function,
\begin{gather}
	\label{eq:6}
	\mathcal{L}_{\textrm{pool}}(\mathcal{G};\theta) = \gamma - \sum_{i=1}^{|\mathcal{V}|} H(\mathbf{X}_i)z_i + \sum_{(i, j) \in \mathcal{E}}z_iz_j,
\end{gather}
where $\theta$ denotes the parameters of the GNN and $z_i$ is the probability score assigned to the node $i$ generated by the GNN. Also, $\gamma$ is a fixed parameter and is set to $\gamma = \sum_{i}H(\mathbf{X}_i)$ to ensure a positive loss value.

Once the probability scores are generated by the GNN, MEWISPool applies \textbf{Algorithm} \ref{tab:1} to incrementally extract the maximum entropy weighted independent set of the input graph. Briefly, Algorithm \ref{tab:1} iterates through the nodes of the graph. In each step, if the current node is neither selected nor rejected, the algorithm calculates the value of Eq.(\ref{eq:6}) conditioned on the selection of the current node and rejection of its neighboring nodes. If the calculated value is less than or equal to the value of the MEWISPool's loss function, the algorithm selects the current node and rejects its neighbors. This procedure is based on the method of conditional expectation which is elaborated in the supplementary materials.

Thus far, we have shown how to solve problem \ref{eq:mewis3} and select the pooled nodes $\mathcal{U}^*$. After the nodes and their corresponding graph signals are pooled based on the solution of problem \ref{eq:mewis3}, the adjacency matrix for the pooled nodes is constructed according to
\begin{gather}
	\label{eq:adj_pooled}
	\mathbf{A}_{\textrm{pooled}} = \psi(A, \mathcal{U}^*),
\end{gather}
where the function $\psi$ is defined as $\psi(A, \mathcal{U}^*) = (\mathbb{1}_{|\mathcal{U}^*|} - \mathbf{I}_{|\mathcal{U}^*|})\odot\textrm{clip}(\mathbf{A}^2_{\mathcal{U}^*} + \mathbf{A}^3_{\mathcal{U}^*})$. $\mathbb{1}_{|\mathcal{U}^*|}$ is a square all-ones matrix of size $|\mathcal{U}^*|$ and $\mathbf{I}_{|\mathcal{U}^*|}$ is the identity matrix of size $|\mathcal{U}^*|$. $\mathbf{A}^k_{\mathcal{U}^*}$ is the submatrix of $\mathbf{A}^k$ corresponding to the nodes of $\mathcal{U}^*$. The term $(\mathbb{1}_{|\mathcal{U}^*|} - \mathbf{I}_{|\mathcal{U}^*|})$, simply removes any emerging self-loops. Finally, the $\textrm{clip}$ function clips any values greater than $1$ to $1$. We justify the choice of the reconstruction function of Eq.(\ref{eq:adj_pooled}) by the following proposition.

\begin{prop}\label{prop5}
	For a simple graph $\mathcal{G} = (\mathcal{V}, \mathcal{E})$ with an associated adjacency matrix $\mathbf{A}$ and an arbitrary maximal independent set $\mathcal{M}$, the reconstruction function of Eq.(\ref{eq:adj_pooled}), preserves the connectivity of each connected component of the graph.
\end{prop}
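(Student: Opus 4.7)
The plan is to show that any two vertices $u, v \in \mathcal{M}$ lying in the same connected component of $\mathcal{G}$ remain connected in the pooled graph whose edge set is encoded by $\psi(\mathbf{A}, \mathcal{M})$. The engine of the argument is the maximality of $\mathcal{M}$: every $w \notin \mathcal{M}$ has at least one neighbor in $\mathcal{M}$, since otherwise $\mathcal{M} \cup \{w\}$ would still be independent, contradicting maximality.

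First, I would observe that, because the entries of $\mathbf{A}^k$ count walks of length $k$ and are nonnegative, the off-diagonal entries of $\textrm{clip}(\mathbf{A}^2_{\mathcal{M}} + \mathbf{A}^3_{\mathcal{M}})$ are nonzero precisely when there is a walk in $\mathcal{G}$ of length $2$ or $3$ between the corresponding pair of $\mathcal{M}$-vertices. The factor $(\mathbb{1}_{|\mathcal{M}|} - \mathbf{I}_{|\mathcal{M}|})$ only removes self-loops. Hence the claim reduces to producing, for every such $u$ and $v$, a sequence $u = m_0, m_1, \ldots, m_k = v$ in $\mathcal{M}$ whose consecutive entries are either equal or at distance at most $3$ in $\mathcal{G}$.

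To build such a sequence, I would fix any $\mathcal{G}$-path $u = w_0, w_1, \ldots, w_L = v$ and project it into $\mathcal{M}$ by setting $m_i = w_i$ when $w_i \in \mathcal{M}$, and otherwise choosing any $\mathcal{M}$-neighbor of $w_i$ (which exists by maximality). A short case analysis on whether $w_i$ and $w_{i+1}$ lie in $\mathcal{M}$ then suffices: the case $w_i, w_{i+1} \in \mathcal{M}$ is impossible by independence of $\mathcal{M}$, the mixed cases give an $(m_i, m_{i+1})$-walk of length at most $2$, and the remaining case yields the length-$3$ walk $m_i \to w_i \to w_{i+1} \to m_{i+1}$. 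In the last two situations the relevant entry of $\mathbf{A}^2_{\mathcal{M}}$ or $\mathbf{A}^3_{\mathcal{M}}$ is positive, so $(m_i, m_{i+1})$ is an edge of the pooled graph; otherwise $m_i = m_{i+1}$ and no edge is needed.

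The delicate point I expect to be the main obstacle is that the shadow sequence depends on arbitrary choices of $\mathcal{M}$-neighbors and need not be simple. I would make explicit that these choices are immaterial: each hop $m_i \to m_{i+1}$ is validated individually by the nonnegativity of walk counts, so concatenating the hops always yields a valid walk in the pooled graph from $u$ to $v$. Since this works for every connected component (which must intersect $\mathcal{M}$ nontrivially by maximality), the connectivity of each component of $\mathcal{G}$ is preserved under reconstruction.
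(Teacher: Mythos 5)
Your proof is correct, and it rests on the same core fact as the paper's: maximality of $\mathcal{M}$ forces every vertex outside $\mathcal{M}$ to have a neighbor inside it, so along any path between two $\mathcal{M}$-vertices one can find a chain of $\mathcal{M}$-vertices with consecutive geodesic distances at most $3$, which is exactly what the $\mathbf{A}^2_{\mathcal{M}}+\mathbf{A}^3_{\mathcal{M}}$ reconstruction detects. The execution differs, though, and yours is the tighter one. The paper proves a lemma by case analysis on a single path of length $4$ between two independent-set vertices (locating an intermediate $\mathcal{M}$-vertex within distance $3$) and then extends to longer paths via a corollary whose general case is dismissed with ``carried out similarly''; your projection argument --- shadowing an arbitrary $u$--$v$ path onto $\mathcal{M}$ vertex by vertex and validating each hop by a walk of length $2$ or $3$ --- handles all path lengths uniformly in one pass and explicitly disposes of the degenerate cases ($m_i=m_{i+1}$, non-simple shadow sequences) that the paper never mentions. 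The one point worth making fully explicit in your write-up is the distinction between walks and shortest paths: the entries of $\mathbf{A}^2_{\mathcal{M}}+\mathbf{A}^3_{\mathcal{M}}$ count walks, and your argument correctly only ever needs existence of a walk of length $2$ or $3$ between distinct $\mathcal{M}$-vertices, so nothing breaks --- but since the paper phrases the reconstruction in terms of ``shortest path length at most $3$,'' it is worth a sentence noting that a walk of length $2$ or $3$ between nonadjacent, distinct vertices suffices to produce a nonzero off-diagonal entry after clipping. With that caveat spelled out, your proof is complete and, if anything, closes a gap the paper leaves open.
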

%\begin{proof}
%	see supplementary materials.
%\end{proof}
To recapitulate, MEWISPool performs as follows; It extracts the pooled nodes and their signals, $\mathbf{X}_{\textrm{pooled}}$, by solving problem \ref{eq:mewis3} and subsequently, reconstructs the pooled adjacency matrix based on Eq.(\ref{eq:adj_pooled}). The complete procedure of MEWISPool as a pooling layer is depicted in Figure \ref{fig: mewispool}.

For graph classification tasks, the whole network is trained according to $\mathcal{L}=\mathcal{L}_{\textrm{task}}+\alpha \mathcal{L}_{\textrm{pool}}$, where $\mathcal{L}_{\textrm{task}}$ is the cross-entropy loss for classification, and $\alpha$ is a regularization factor whose effect is studied in the supplementary materials. Technically, $\mathcal{L}_{\textrm{task}}$ is defined as $-\mathbf{y}^T\log (\mathbf{\hat{y}})$, where $\mathbf{y}$ and $\mathbf{\hat{y}}$ are the ground truth and predicted class labels, respectively.

\begin{figure*}[t]
	\vskip 0.2in
	\begin{center}
		\centerline{\includegraphics[width=\columnwidth]{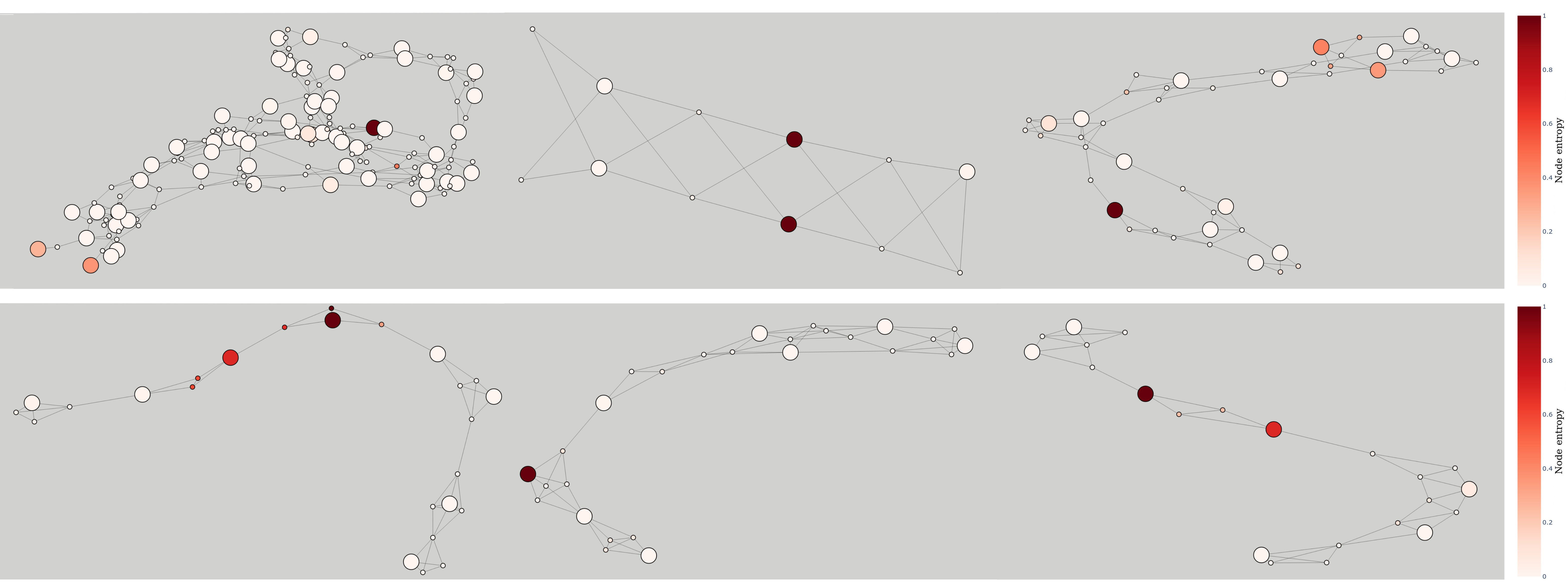}}
		\caption{\scriptsize Illustration of sampled nodes by MEWISPool in PROTEINS dataset\cite{dd1prot,enzprot}. The higher color intensities correspond to higher node entropies. Nodes with larger size represent the nodes sampled by MEWISPool. Note that MEWISPool adaptively samples nodes that cover the entire graph.}
		\label{fig:enz}
	\end{center}
	\vskip -0.2in
\end{figure*}

\section{Experiments}
\label{sec:experiments}
In this section, we evaluate our method on several problems, namely supervised graph classification task, and the unsupervised MIS problem. Furthermore, we introduce different architectural settings used on each task and dataset. We also mention the advantage of MEWISPool comparing to other pooling techniques in being adaptive, so that it does not require the hyper-parameter of pooling ratio.

\subsection{Datasets}\label{subsec:datasets}
To evaluate MEWISPool on graph classification tasks, we use the social network datasets \textbf{IMDB-B}, \textbf{IMDB-M}, and \textbf{COLLAB} \cite{imdbcollab}, small molecules datasets \textbf{FRANKENSTEIN} \cite{frankenstein}, \textbf{Mutagenicity} \cite{mutagenicity1}, and \textbf{MUTAG} \cite{mutag1}, and also bioinformatics datasets \textbf{D\&D} \cite{dd2}, and \textbf{PROTEINS} \cite{dd1prot}. For the MIS problem we use the citation networks datasets \cite{citationnetworks}. The statistics of each dataset is reported in the supplementary materials.

\begin{table}[t]
	\caption{\scriptsize The empirical results of graph classification tasks. The best and second best results are demonstrated with black and blue bold-faced fonts, respectively. The results of MEWISPool consist of the means and standard deviations of the accuracies according to the 10-fold cross-validation experimental setting. Note that the accuracy values with no standard deviations have been reported in the same fashion as their corresponding papers. Also, the \textbf{-} signs in the table indicates that the result of the corresponding dataset has not been reported in the original papers.}
	\resizebox{\columnwidth}{!}{
		\begin{tabular}{lcccccccc}
			\toprule
			\large Model & \large D\&D & \large FRANKENSTEIN & \large MUTAG & \large Mutagenicity & \large PROTEINS & \large COLLAB & \large IMDB-B & \large IMDB-M \\
			\midrule
			\large Set2Set\cite{set2set} & \large $70.83 \pm 0.84$ & \large $61.92 \pm 0.73$ & \large $86.78 \pm 7.33$ & \large $71.35 \pm 2.1$ & \large $79.33 \pm 0.84$ & \large $71.75$ & \large $71.00 \pm 7.54$ & \large $49.73 \pm 4.19$ \vspace{0.2cm} \\
			\large ECC\cite{ecc} & \large $73.68$ & \large $63.87 \pm 2.02$ & \large $88.33$ & \large $71.89 \pm 1.20$ & \large $72.33 \pm 3.4$ & \large $67.82 \pm 2.4$ & \large $67.70 \pm 2.8$ & \large $43.48 \pm 3.0$ \vspace{0.2cm} \\
			\large SortPool\cite{dgcnn} & \large $79.37 \pm 0.94$ & \large $60.61 \pm 0.77$ & \large $85.83 \pm 1.66$ & \large $70.66 \pm 1.51$ & \large $76.26 \pm 0.24$ & \large $73.76 \pm 0.49$ & \large $70.03 \pm 0.86$ & \large $47.83 \pm 0.85$ \vspace{0.2cm} \\
			\large DiffPool\cite{diffpool} & \large $75.05 \pm 3.4$ & \large $60.60 \pm 1.62$ & \large $88.87 \pm 6.75$ & \large $77.6 \pm 2.7$ & \large $73.72 \pm 3.5$ & \large $74.83 \pm 2.0$ & \large $68.40 \pm 6.1$ & \large $45.62 \pm 3.4$ \vspace{0.2cm} \\
			\large StructPool\cite{structpool} & \large $\mathbf{\textcolor{blue}{84.19}}$ & \large - & \large $\mathbf{\textcolor{blue}{93.59}}$ & \large - & \large $\mathbf{\textcolor{blue}{80.36}}$ & \large $74.22$ & \large $74.70$ & \large $52.47$ \vspace{0.2cm} \\
			\large SAGPool\cite{sagpool} & \large $78.35 \pm 3.5$ & \large $62.57 \pm 0.60$ & \large $93.32 \pm 4.16$ & \large $72.4 \pm 2.4$ & \large $78.28 \pm 4.0$ & \large $76.92 \pm 1.6$ & \large $72.80 \pm 2.3$ & \large $49.43 \pm 2.6$ \vspace{0.2cm} \\
			\large Graph U-Net\cite{gunet} & \large $82.14 \pm 3.0$ & \large $61.46 \pm 0.84$ & \large $87.77 \pm 6.47$ & \large $71.9 \pm 3.7$ & \large $77.20 \pm 4.3$ & \large $77.58 \pm 1.6$ & \large $73.40 \pm 3.7$ & \large $50.27 \pm 3.4$ \vspace{0.2cm} \\
			\large iPool\cite{ipool} & \large $79.45 \pm 2.78$ & \large - & \large $90.42 \pm 4.68$ & \large - & \large $77.36 \pm 3.27$ & \large $77.28 \pm 2.17$ & \large $73.30 \pm 2.72$ & \large $51.27 \pm 3.44$ \vspace{0.2cm} \\
			\large MinCutPool\cite{mincutpool} & \large $80.8 \pm 2.3$ & \large $\mathbf{\textcolor{blue}{65.94 \pm 1.60}}$ & \large $87.34 \pm 6.31$ & \large $79.9 \pm 2.1$ & \large $76.5 \pm 2.6$ & \large $\mathbf{83.4 \pm 1.7}$ & \large $\mathbf{\textcolor{blue}{79.0 \pm 2.0}}$ & \large $52.8 \pm 1.69$ \vspace{0.2cm} \\
			\large VIPool\cite{gxn} & \large $82.68 \pm 4.1$ & \large - & \large - & \large $\mathbf{\textcolor{blue}{80.19 \pm 1.02}}$ & \large $79.91 \pm 4.1$ & \large $78.82 \pm 1.4$ & \large $78.60 \pm 2.3$ & \large $\mathbf{\textcolor{blue}{55.20 \pm 2.5}}$ \vspace{0.2cm} \\
			\midrule
			\large MEWISPool & \large $\mathbf{84.33 \pm 1.18}$ & \large $\mathbf{73.46 \pm 0.95}$ & \large $\mathbf{96.66 \pm 1.23}$ & \large $\mathbf{80.66 \pm 1.72}$ & \large $\mathbf{80.71 \pm 2.31}$ & \large $\mathbf{\textcolor{blue}{79.66 \pm 4.02}}$ & \large $\mathbf{82.13 \pm 1.21}$ & \large $\mathbf{56.23 \pm 1.04}$ \\
			\bottomrule
		\end{tabular}}
	\label{tab:graph_class}
\end{table}

\subsection{Experimental Setup}\label{subsec:exp-setup}
The implementation of MEWISPool consists of three graph convolutional layers for the graph classification task and six graph convolutional layers for the MIS problem. We use \textbf{GIN}\cite{gin} for the graph convolutional layers. Following the convolutional layers, the conditional expectation module, as explained in Algorithm \ref{tab:1}, extracts the MWIS of the graph. Note that in contrast with the majority of pooling layers, MEWISPool does not require a pooling ratio, which in turn facilitates the procedure by treating the ratio adaptively with respect to the graph structure. For the MIS experiments, the nodes weights are all set to $1$, implying equal importance for all the vertices.

In order to conduct the experiments for graph classification tasks, we use an architecture which consists of three GIN layers, two MEWISPool layers, and two dense layers which constitute the classifier. MEWISPool is followed by each convolutional layer except the last GIN which is followed by the classifier. Furthermore, we employ batch normalization\cite{batchnorm} and dropout\cite{dropout} layers after each convolutional layer. The activation functions for all neurons are set to the rectified linear units (ReLU).

The whole network is trained in an end-to-end manner using the \textbf{Adam}\cite{adam} optimizer. For the graph classification task, the objective is the minimization of the cross-entropy loss for the classification and the MEWISPool's loss as in Eq.(\ref{eq:6}), whereas in the MIS problem, the only objective is the loss function of MEWISPool. The trade-off between the cross-entropy loss and the MEWISPool's loss are controlled via a regularization factor which is set to $0.01$ for all the graph classification experiments. The learning rate is set to $10^{-3}$ and the model is trained for $200$ epochs. Additionally, all graph classification experiments are performed in a 10-fold cross-validation experimental setting. The whole experiment is implemented using \textbf{PyTorch}\cite{pytorch}, \textbf{PyTorch Geometric}\cite{torchgeom}, and \textbf{Deep Graph Library}\cite{dgl} packages, and is executed on GeForce GTX 1080 Ti GPU.

\subsection{Empirical Results}
\textbf{Supervised graph classification task.} In this experiment, we quantitatively evaluate the performance of MEWISPool. The datasets of this task contain sets of input graphs with their corresponding labels, and the task is to classify each input graph. The datasets mostly consist of feature vectors or node labels for the vertices, which we use as input features to the model. For the datasets with no node attributes, we use the one-hot encoding of node degrees as input features. We compare the performance of MEWISPool with the state-of-the-art pooling techniques, namely, \textbf{Set2Set}\cite{set2set}, \textbf{ECC}\cite{ecc}, \textbf{SortPool}\cite{dgcnn}, \textbf{DiffPool}\cite{diffpool}, \textbf{SAGPool}\cite{sagpool}, \textbf{Graph U-Net}\cite{gunet}, \textbf{MinCutPool}\cite{mincutpool}, \textbf{StructPool} \cite{structpool}, \textbf{iPool} \cite{ipool}, and \textbf{VIPool} \cite{gxn}. Table \ref{tab:graph_class} demonstrates the performance of MEWISPool in comparison with the aforementioned methods. As the results suggest, MEWISPool outperforms other methods in seven benchmark datasets and achieves the second best result in the COLLAB dataset. Figure \ref{fig:enz} illustrates sample maximum entropy weighted independent sets extracted by MEWISPool from the PROTEINS dataset.

\textbf{Unsupervised combinatorial optimization problem of MIS.} In this experiment, we evaluate a standalone implementation of MEWISPool, to approximately solve the NP-hard problem of MIS. The problem is to find the largest possible subset of vertices in the graph such that no two of which are adjacent. The evaluation metric for assessing the performance of MIS-solving methods is simply the cardinality of the solution set found by the method. Here, as in \cite{guidedtree}, we compare MEWISPool with a greedy classic heuristic referred to as \textbf{Classic}\cite{classic}, and two state-of-the-art methods for solving MIS problem, namely, \textbf{S2V-DQN}\cite{dqn}, which is based on reinforcement learning paradigms, and \textbf{GCN-GTS}\cite{guidedtree}, which is a supervised method based on the combination of GNNs and guided tree search heuristic algorithm. The numerical results are shown in Table \ref{tab:mis}. As the results suggest, MEWISPool outperforms the Classic and S2V-DQN methods but fails to perform as desirably as GCN-GTS. However, it is worth mentioning that GCN-GTS treats the problem in a supervised manner, meaning that it requires a set of training solutions for the MIS problem, while MEWISPool tries to solve the MIS problem with no supervision. Figure \ref{fig:loss-mis} illustrates the learning curve of MEWISPool and the evolution of solved MIS for the Cora dataset.
\begin{table}
	\begin{minipage}{0.40\linewidth}
		\caption{\scriptsize The comparison of MEWISPool's performance and other methods based on the size of the solution. MEWISPool consistently performs better than the classic greedy method and the reinforcement learning method of S2V-DQN \cite{classic, dqn}.}
		\resizebox{\columnwidth}{!}{
			\begin{tabular}{lccc}
				\toprule
				Model & Cora & Citeseer & PubMed \\
				\midrule
				Classic\cite{classic} & $1424$ & $1848$ & $15852$  \\
				S2V-DQN\cite{dqn} & $1381$ & $1705$ & $15705$ \\
				GCN-GTS\cite{guidedtree} & $\mathbf{1451}$ & $\mathbf{1867}$ & $\mathbf{15912}$ \\
				\midrule
				MEWISPool & $\mathbf{\textcolor{blue}{1433}}$ & $\mathbf{\textcolor{blue}{1852}}$ & $\mathbf{\textcolor{blue}{15862}}$ \\
				\bottomrule
		\end{tabular}}
		\label{tab:mis}
	\end{minipage}\hfill
	\begin{minipage}{0.55\linewidth}
		\centering
		\includegraphics[scale=0.08]{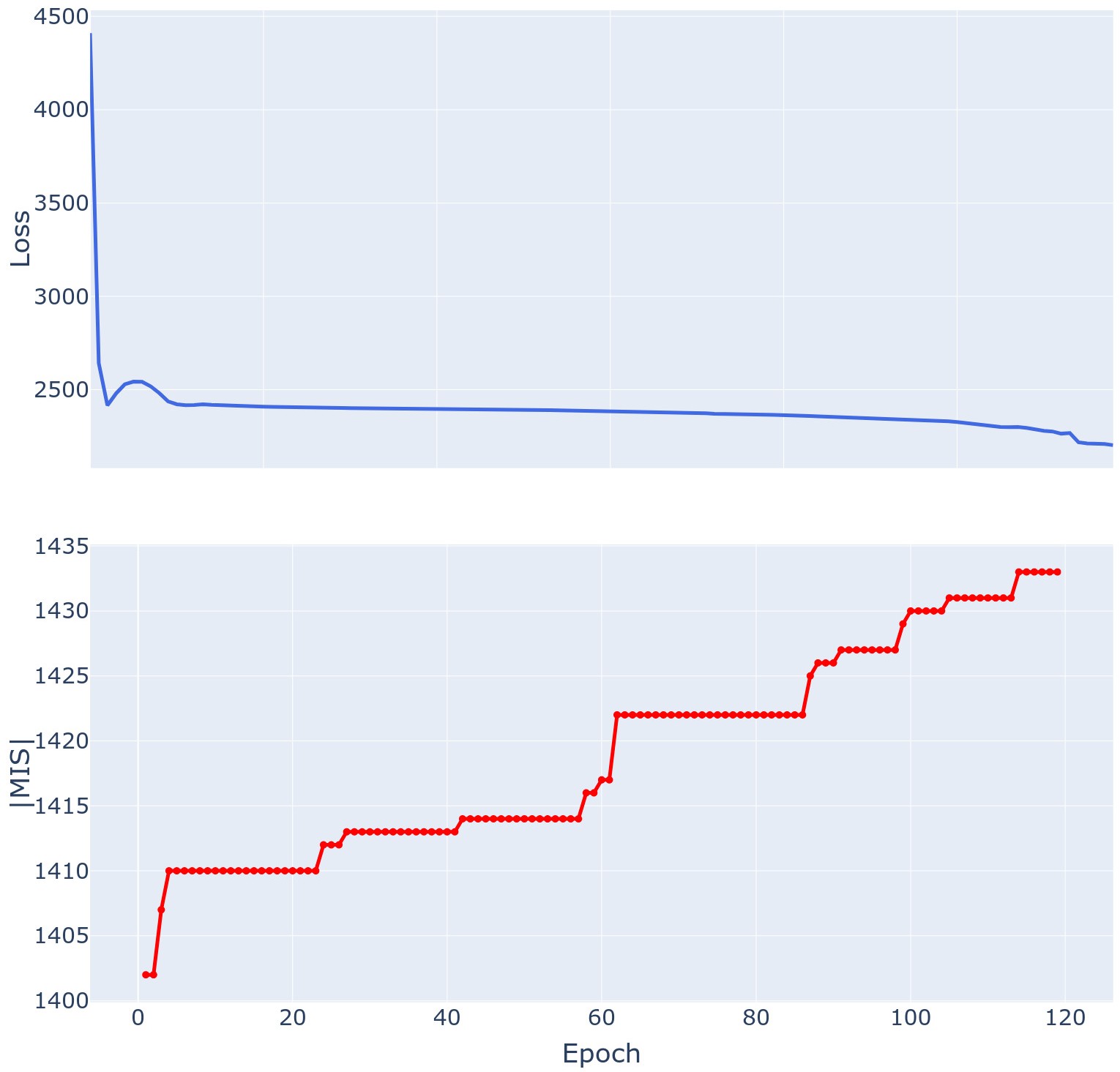}
		\captionof{figure}{\scriptsize Training loss (up) and size of learned MIS (down) for Cora dataset.}
		\label{fig:loss-mis}
	\end{minipage}
%	\begin{minipage}{0.30\linewidth}
%		\centering
%		\includegraphics[scale=0.08]{images/plot2.png}
%		\captionof{figure}{\scriptsize 2-D scatterplot of the Student Database}
%		\label{fig:112}
%	\end{minipage}
\end{table}

\subsection{Complexity Analysis}\label{subsec:ca}
%As mentioned earlier, MEWISPool calculates probability values for the nodes of the graph indicating their fitness for joining the optimal solution. These probabilities are learned by a sequence of GNNs. Then, the conditional expectation module extracts a suboptimal solution. However, the setting slightly differs for the graph classification task and the MIS problem, since in the MIS problem no entropy-based weights are computed. Hence, we provide separate yet related complexity analysis for both settings.

\textbf{MIS problem.} For the MIS problem, the weights of all nodes are preset to $1$ and the computational complexity of MEWISPool reduces to the computational complexity of the GNN followed by the derandomization algorithm of conditional expectation as presented in Algorithm \ref{tab:1}. The computational complexity of a graph convolutional layer is $\mathcal{O}(|\mathcal{V}|^3)$ \cite{structpool}. In the worst case, Algorithm \ref{tab:1} iterates through all the nodes of the graph ($|\mathcal{V}|$), and for each node, it calculates the loss function of MEWISPool as in Eq.(\ref{eq:6}).
Dominated by the second and third terms, the complexity of the loss function is $\mathcal{O}(|\mathcal{V}|) + \mathcal{O}(|\mathcal{E}|)$. Therefore, the total complexity of the conditional expectation module is $\mathcal{O}(|\mathcal{V}|^2) + \mathcal{O}(|\mathcal{V}||\mathcal{E}|)$. Finally, the computational complexity of MEWISPool for the MIS problem can be written as $\mathcal{O}(|\mathcal{V}|^3) + \mathcal{O}(|\mathcal{V}|^2) + \mathcal{O}(|\mathcal{V}||\mathcal{E}|)$. Assuming that the graphs are sparse, i.e., $\mathcal{O}(|\mathcal{E}|) \approx \mathcal{O}(|\mathcal{V}|)$, the complexity reduces to $\mathcal{O}(|\mathcal{V}|^2) + \mathcal{O}(|\mathcal{V}|^3) \approx \mathcal{O}(|\mathcal{V}|^3)$ which is the complexity of a single graph convolutional layer. For the case of the Cora dataset, which contains 2708 nodes and 5429 edges, the running time of a feed-forward of the network is 2.9 seconds.

\textbf{Graph classification task.} 
The computational complexity of MEWISPool for the graph classification task is similar to the MIS problem, except that we need to compute entropy-based weights beforehand. Computation of node entropies involves calculating L$_2$ norms of the local variation vectors for the nodes (Eq.(\ref{eq:Delta})) with the order of $\mathcal{O}(|\mathcal{V}| k d)$, where $k$ is the maximum degree of the graph and $d$ is the dimensionality of nodes features.
Computing the probabilities of the nodes (Eq.(\ref{eq:softmax})) is in the order of $\mathcal{O}(|\mathcal{V}|)$, and finally computing the nodes entropies based on the calculated probabilities is in the order of $\mathcal{O}(|\mathcal{V}|)$. The rest is similar to the case of MIS problem. Thus, the ultimate complexity of MEWISPool for the graph classification task is $\mathcal{O}(|\mathcal{V}| k d) + \mathcal{O}(|\mathcal{V}|^3) + \mathcal{O}(|\mathcal{V}||\mathcal{E}|)$. Assuming $d \ll |\mathcal{V}|$ and the graphs are sparse, i.e., $\mathcal{O}(|\mathcal{E}|) \approx \mathcal{O}(|\mathcal{V}|)$ and $\mathcal{O}(k)$ is constant, the complexity reduces to $\mathcal{O}(|\mathcal{V}| d) + \mathcal{O}(|\mathcal{V}|^3) \approx \mathcal{O}(|\mathcal{V}|^3)$ which is the complexity of a single graph convolutional layer. For the case of the large scale D\&D dataset, the average running time of a feed-forward of the network for a single graph is 0.059 second.

Even though MEWISPool has a relatively high computational cost, its functionality involves finding an approximate solution for the NP-hard MWIS problem, however, due to its superior performance, it motivates further research on applying CO problems within the graph neural architectures.

\section{Conclusion}
\label{sec:conc}

Pooling operation serves as an essential component in GNNs for hierarchical representation learning. In this paper, we developed a structure-adaptive pooling layer based on the combinatorial optimization problem of MWIS with entropy-based weights. We expanded the insights regarding the proposed method using the concepts of Shannon capacity of graphs in communication theory and the Infomax principle for maximizing the mutual information between the network's input and output. We proposed a neural execution to approximate the solution for the NP-hard problem of MWIS. Finally, we evaluated the method on several benchmark datasets for graph classification tasks and the MIS problem, and achieved the state-of-the-art and competitive results.

\bibliographystyle{plain}

\appendix

\section{Appendix}
\subsection{The Maximum Independent Set and the Maximum Weight Independent Set of a Graph}
Let $\mathcal{G} = (\mathcal{V}, \mathcal{E})$ be an undirected, unweighted, and simple graph, where $\mathcal{V}$ is the set of vertices or nodes of the graph and $\mathcal{E}$ denotes the set of edges. An \textit{independent set} of the graph is defined as a subset of nodes $I \subset \mathcal{V}$ such that there is no edge between any two nodes $i, j \in I$; i.e., for all $i, j \in I$, $(i, j) \notin \mathcal{E}$.

\begin{definition}
	A maximum independent set (MIS) of a graph $\mathcal{G} = (\mathcal{V}, \mathcal{E})$ is an independent set $I$ with the maximum cardinality.
\end{definition}

With this definition, a maximum independent set of a graph, is an independent set with largest possible number of vertices. The maximum independent set problem, is a combinatorial optimization problem defined over a graph whose solution determines a maximum independent set of the graph. This problem can be formulated as the following integer program
\begin{maxi}|l|
	{\mathbf{z}}{f(\mathbf{z}) = \sum_{i=1}^{|\mathcal{V}|} z_i}{}{}
	\label{eq:mis}
	\addConstraint{z_i + z_j \leq 1}{,\quad}{(i, j) \in \mathcal{E}}
	\addConstraint{z_i \in \{0, 1\}}{,\quad}{i \in \{1,\dots,|\mathcal{V}|\}},
\end{maxi}
where $\mathbf{z} = [z_1,\dots,z_{|\mathcal{V}|}]^T$ denotes an indicator vector whose element $z_i$ is a binary variable indicating whether the corresponding node $i$ belongs to the MIS ($z_i = 1$) or not ($z_i = 0$).

\begin{definition}
	Let $\mathbf{w} \in \mathbb{R}^{|\mathcal{V}|}$ be a weight vector assigned to the vertices of a graph $\mathcal{G}$. A maximum weight independent set (MWIS) of $\mathcal{G}$ is an independent set $I$ of the graph with the maximum total weight.
\end{definition}

The maximum weight independent set problem, is a combinatorial optimization problem defined over a graph whose solution determines a maximum weight independent set of the graph. This problem can be formulated as the following integer program
\begin{maxi}|l|
	{\mathbf{z}}{f(\mathbf{z}) = \sum_{i=1}^{|\mathcal{V}|}w_iz_i}{}{}
	\label{eq:mwis}
	\addConstraint{z_i + z_j \leq 1}{,\quad}{(i, j) \in \mathcal{E}}
	\addConstraint{z_i \in \{0, 1\}}{,\quad}{i \in \{1,\dots,|\mathcal{V}|\}},
\end{maxi}
where $w_i$ is the weight assigned to the vertex $i$.

The maximum independent set problem is a special case of the maximum weight independent set problem where all the weights are set to $1$. Both problems are considered as NP-hard problems and finding an exact solution for these problems is intractable.

\subsection{Shannon Capacity and the Confusability Graph}
\label{subsec:shannon}
A channel is a medium that is used to convey an information signal, from one or several senders (or transmitters) to one or several receivers. In information theory, a channel refers to a theoretical channel model with certain error characteristics. The \textit{channel capacity} of a given channel is the highest information rate that can be achieved with arbitrarily small error probability, and is given by the maximum of the mutual information between the input and output of the channel.

Given a communication channel in which certain signal values can be confused with each other due to the presence of noise, the \textit{Shannon capacity} models the rate of information that can be transmitted through such a channel \cite{shannon}. Suppose each distinct message or signal is represented by a vertex in a graph and two vertices are connected by an edge if and only if their corresponding signals are confused through transmission. Mathematically speaking, a channel conveying the transmitted signal $x$ to the received signal $y$ can be modeled as $p_{\textrm{y}|\textrm{x}}(y|x)$. Let $X \in \{x_1, x_2, \dots, x_n\}$ be the transmitted signal and $Y$ be the received signal. For a noiseless channel we have
\begin{gather}
	p(Y=x_i|X=x_i) = 1,\ i \in \{1, \dots, n\},
\end{gather}
meaning that the received signal $Y$ equals to the transmitted signal $X=x_i$. The above expression for a noisy channel can be written as,
\begin{gather}
	p(Y=x_i|X=x_j) = \epsilon_{ij} \geq 0,\ i,j \in \{1,\dots,n\},
\end{gather}	
\begin{figure}[t]
	\begin{center}
		\centerline{\includegraphics[width=0.2\columnwidth]{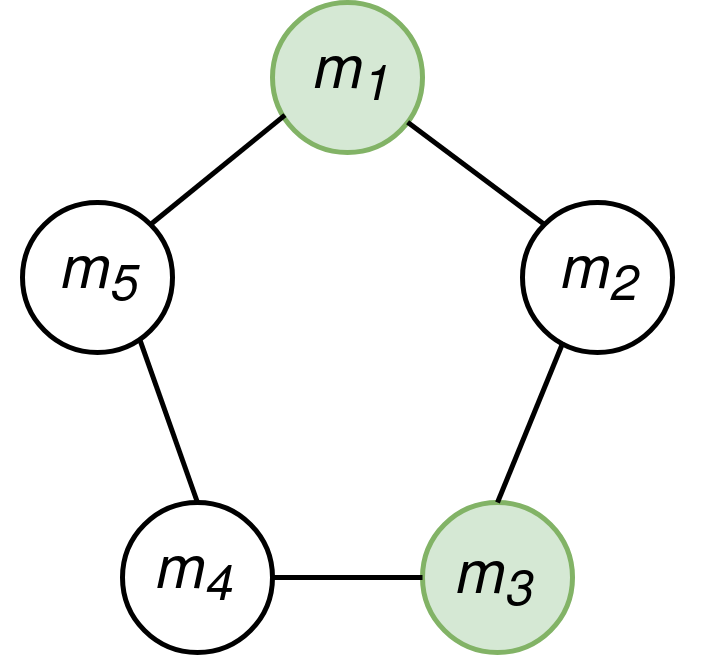}}
		\caption{The confusability graph of a noisy channel with input symbols of $m_1, m_2, m_3, m_4$, and $m_5$, where $m_i$ can be confused with $m_{i+1}$ across the noisy channel and $i$ belongs to a modular arithmetic modulo $5$. The vertices marked as green denote an example for an independent set for which the transmission of corresponding symbols is error-free and without confusion.}
		\label{fig: pentagon}
	\end{center}
\end{figure}
which means that the probability of receiving $Y=x_i$ given that the signal $X=x_j$ is transmitted is $\epsilon_{ij}$. Now, suppose the set $\{x_1, x_2, \dots, x_n\}$ corresponds to the nodes of a graph. Nodes $i$ and $j$ are connected if and only if $\epsilon_{ij} > 0$. The resulting graph is called the \textit{confusability graph} of the noisy communication channel. One can simply observe that sending the symbols belonging to an independent set of the confusability graph yields a reliable and error-free transmission of information through the noisy channel. Figure \ref{fig: pentagon} illustrates a simple example of this concept.

\subsection{Proof of Proposition 1}
\begin{proof}
	Let $X$ and $Y$ be random variables, and $\phi$ be a deterministic differentiable mapping from $X$ to $Y$. We prove the proposition for both cases of $X$ and $Y$ being discrete and continuous. For the discrete case, we have
	\begin{gather}
		\mathcal{I}(X;Y) = H(Y) - H(Y|X),
	\end{gather}
	where $\mathcal{I}(X;Y)$ is the mutual information between $X$ and $Y$ and $H(Y) = -\sum_y p_Y(y)\log p_Y(y)$ is the entropy of the random variable $Y$, and $H(Y|X) = -\sum_y p_{Y|X}(y|x)\log p_{Y|X}(y|x)$ is the conditional entropy of $Y$ given $X$.
	
	Since $Y = \phi(X)$, the conditional probability distribution of $Y|X$ is given by
	\begin{gather}
		p_{Y|X}(y|x) = \left\{\begin{array}{lr}
			1 & y = \phi(x)\\
			0 & \text{otherwise}.\\
		\end{array}
		\right.
	\end{gather}
	Hence, we get
	\begin{gather}\label{eq:p1-dent}
		\begin{split}
			H(Y|X) &= -\sum_y p_{Y|X}(y|x)\log p_{Y|X}(y|x)\\
			&= -p_{Y|X}(\phi(x)|x)\log p_{Y|X}(\phi(x)|x) \\
			&= 0.
		\end{split}
	\end{gather}
	The second line of Eq.(\ref{eq:p1-dent}) is because $p_Y(y)$ is zero everywhere, except where $y = \phi(x)$. Also, the third line of Eq.(\ref{eq:p1-dent}) holds because $\log p_{Y|X}(\phi(x)|x) = \log 1 = 0$. Therefore, we get
	\begin{gather}
		\mathcal{I}(X;Y) = H(Y).
	\end{gather}
	Thus, maximizing $\mathcal{I}(X;Y)$ is equivalent to maximizing $H(Y)$.
	
	For the continuous case, we have
	\begin{gather}\label{eq:p1-diff}
		\mathcal{I}(X;Y) = h(Y) - h(Y|X),
	\end{gather}
	where $h(Y)$ is the \textit{differential entropy} of the continuous random variable $Y$ and is defined as
	\begin{gather}
		h(Y) = -\int f_Y(y)\log f_Y(y)dy,
	\end{gather}
	where $f_Y(y)$ is the probability density function of $Y$. Similarly, $h(Y|X)$ is the conditional differential entropy of $Y|X$ defined as
	\begin{gather}\label{eq:p1-cent}
		h(Y|X) = -\int f_{Y|X}(y|x)\log f_{Y|X}(y|x)dy.
	\end{gather}
	Since $Y = \phi(X)$, the conditional cumulative distribution function of $Y$ given $X$ is given by
	\begin{gather}
		F_{Y|X}(y|x) = p_{Y|X}(Y \leq y| x) = \left\{\begin{array}{lr}
			1 & y \geq \phi(x)\\
			0 & \text{otherwise}.\\
		\end{array}
		\right.
	\end{gather}
	Therefore, the conditional probability density function of $Y$ given $X$ will be
	\begin{gather}
		f_{Y|X}(y|x) = \frac{\partial F_{Y|X}(y|x)}{\partial y} = \delta(y - \phi(x)),
	\end{gather}
	where $\delta(.)$ is the continuous Dirac delta function. Thus, the conditional differential entropy of Eq.(\ref{eq:p1-cent}) will be
	\begin{gather}
		h(Y|X) = -\int \delta(y - \phi(x))\log \delta(y - \phi(x))dy,
	\end{gather}
	which is the differential entropy of shifted Dirac delta function which is $-\infty$. Intuitively, with the mapping from $X$ to $Y$ assumed to be deterministic, the conditional differential entropy $h(Y|X)$ attains its lowest possible value and diverges to $-\infty$. This result is due to the differential nature of the entropy of a continuous random variable.
	
	Assuming that the mapping $\phi$ is parameterized by the parameters $\theta$, based on Eq.(\ref{eq:p1-diff}), we get
	\begin{gather}
		\frac{\partial \mathcal{I}(X;Y)}{\partial \theta} = \frac{\partial h(Y)}{\partial \theta},
	\end{gather}
	because the conditional differential entropy $h(Y|X)$ is independent of $\theta$. This indicates that maximizing the differential entropy of $Y$ is equivalent to maximizing the mutual information between $Y$ and $X$, with both maximizations being performed with respect to the parameters $\theta$ of the mapping \cite{haykin, infomax}.
	
	Since a neural network is a deterministic differentiable mapping, by replacing $Y$ with $X_\mathcal{U}$, the proposition is proven.
\end{proof}
\subsection{Proof of Proposition 2}
\begin{proof}
	Here, we prove the proposition for the discrete random variables and the proof for the continuous case is carried out in a similar way.
	
	Let $\mathcal{G} = (\mathcal{V}, \mathcal{E})$ be an undirected simple graph and $\mathbf{X}$ be a set of random variables assigned to the vertices of $\mathcal{G}$. Also, let $g$ be a probability distribution associated to $\mathbf{X}$ such that for any $(i, j) \notin \mathcal{E}$, it follows that $\mathbf{X}_i$ and $\mathbf{X}_j$ are statistically independent, where $\mathbf{X}_i$ and $\mathbf{X}_j$ denote the $i^\textrm{th}$ and $j^\textrm{th}$ elements of the set $\mathbf{X}$, respectively. We have $H(\mathbf{X}_i) = -\sum g(\mathbf{X}_i)\log g(\mathbf{X}_i)$. 
	Using the probability chain rule, we can compute the joint entropy of $\mathbf{X} = \{\mathbf{X}_1,\dots,\mathbf{X}_{|\mathcal{V}|}\}$ as
	\begin{gather}
		H(\mathbf{X}) = \sum_{i=1}^{|\mathcal{V}|} H(\mathbf{X}_i|\mathbf{X}_1,\dots,\mathbf{X}_{i-1}).
	\end{gather}
	For a set of independent random variables, their joint probability distribution is equal to the product of their marginal probability distributions. Accordingly, the joint entropy is equal to the sum of the entropies of each random variable. Since, in problem \ref{eq:mewis2}, the joint entropy is being maximized over an independent set of the graph $\mathcal{G}$, where the distribution $g$ implicates statistical independence between disconnected vertices, we have
	\begin{gather}
		H(\mathbf{X}_\mathcal{U}) = \sum_{i \in \mathcal{U}} H(\mathbf{X}_i),
	\end{gather} 
	where $\mathcal{U}$ is an independent set of the graph $\mathcal{G}$.
\end{proof}

\subsection{Proof of Proposition 4}
\begin{proof}
	We assume that the reader is familiar with the concept of Markov Random Fields (MRF) and their properties.
	\begin{definition}
		A set of random variables $X$ over a graph $\mathcal{G}=(\mathcal{V}, \mathcal{E})$, is said to be Gibbs Random Field (GRF), if the joint probability distribution can be written as
		\begin{gather}
			p(X)=\frac{1}{Z}\prod_{c_i \in \mathcal{C}}\phi_i(c_i),
		\end{gather} 
		where $\mathcal{C}$ is the set of the cliques of the graph $\mathcal{G}$, and $\phi_i(c_i)$ is the potential assigned to the clique $c_i \in \mathcal{C}$. Also, $Z$ is a normalizing constant which is called the partition function. 
	\end{definition}
	For a GRF, it is typically convenient to write the joint probability as 
	\begin{gather}
		p(X)=\frac{1}{Z}\exp[-\sum_{c_i \in \mathcal{C}}f_i(c_i)],
	\end{gather} 	
	where the potential of each clique is defined as an exponential function.
	\begin{lemma}\label{lemma:prop4}
		If the node probabilities $p(X_i)$ of a set of random variables $X=\{X_1, \dots, X_n\}$ over the nodes of a graph $\mathcal{G}=(\mathcal{V}, \mathcal{E})$ is of exponential form, then the joint probability distribution can be written as a Gibbs Random Field.
	\end{lemma}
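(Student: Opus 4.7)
The plan is to prove this via the Hammersley--Clifford theorem, which states that any strictly positive distribution that is Markov with respect to a graph $\mathcal{G}$ admits a factorization over the cliques of $\mathcal{G}$. First I would establish strict positivity of the joint: since each node marginal has the exponential form $p(X_i) \propto \exp(-f_i(X_i)) > 0$, the joint $p(X)$ must be strictly positive on the entire configuration space, which is precisely the hypothesis that Hammersley--Clifford requires.

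Next I would appeal to the Markov property that the graph $\mathcal{G}$ imparts on $X$. In the paper's construction, the node potential $f_i$ is built from local quantities (cf.\ Eqs.~(\ref{eq:Delta})--(\ref{eq:softmax})): $\Delta_i$ depends only on node $i$ and its neighbors $\mathcal{N}_i$, so $p(X_i)$ is specified by strictly local information. This local dependence translates into the local Markov property---$X_i$ is conditionally independent of the non-neighbors given $\mathcal{N}_i$---so $X$ is a Markov Random Field over $\mathcal{G}$. Hammersley--Clifford then yields the clique factorization $p(X) = \frac{1}{Z}\prod_{c \in \mathcal{C}}\phi_c(c)$ with each $\phi_c > 0$. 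Setting $f_c(c) := -\log \phi_c(c)$, which is well-defined by positivity, produces the Gibbs form $p(X) = \frac{1}{Z}\exp\bigl(-\sum_{c \in \mathcal{C}} f_c(c)\bigr)$.

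The hard part will be justifying the Markov property itself, since exponential marginals alone do not imply Markovianity; the proof must therefore lean on the \emph{local} exponential structure inherited from the paper's construction, not on exponential form in the abstract. I would make this dependence explicit by examining the full conditional $p(X_i \mid X_{-i})$ obtained from the joint, and showing that it reduces to a function of $X_{\mathcal{N}_i}$ because $f_i$ only couples $X_i$ to its neighbors. Once this conditional-independence step is set up carefully, the remainder of the argument is a direct invocation of Hammersley--Clifford together with a logarithmic rewrite of the clique potentials.
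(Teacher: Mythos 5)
Your route is genuinely different from the paper's --- the paper proves the converse implication (a Gibbs factorization yields exponential marginals, by summing out the clique potentials) and then simply asserts that ``the procedure can be followed in reverse,'' whereas you try to go forward via Hammersley--Clifford. But your argument has two concrete gaps. First, strict positivity of the joint does not follow from strict positivity of the marginals: two perfectly correlated binary variables have strictly positive marginals while their joint vanishes on the off-diagonal configurations, so the hypothesis of Hammersley--Clifford is not secured by the exponential form of the $p(X_i)$ alone. Second, and more seriously, the Markov property cannot be extracted the way you propose. You plan to examine the full conditional $p(X_i \mid X_{-i})$ ``obtained from the joint,'' but the joint is precisely the object whose existence and structure the lemma is supposed to deliver; the hypothesis only hands you the node marginals of Eq.~(\ref{eq:softmax}), and a collection of marginals does not determine a joint, let alone its conditional-independence structure. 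So the step you yourself flag as ``the hard part'' is not merely hard --- as set up, it is circular.

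For what it is worth, the paper's own proof does not close this gap either: it verifies that a pairwise Gibbs factorization has exponential marginals (each marginal is a sum of exponentials, hence positive, hence ``of exponential form'') and then claims reversibility without justification; since every strictly positive function is trivially the exponential of its logarithm, the stated hypothesis carries essentially no structural information, and neither argument actually pins down a joint distribution that is a GRF. A defensible version of the lemma would have to take as hypothesis not the form of the marginals but a locally defined family of clique potentials (e.g.\ the pairwise terms implicit in $\Delta_i$), at which point the Gibbs form holds by construction and Hammersley--Clifford is only needed for the subsequent Markov/independence claim in Proposition~\ref{prop4}.
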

	\begin{proof}
		We start backwards, with all the directions being reversible. Suppose $X$ is of Gibbs Random Field form. Then we have
		\begin{gather}
			p(X)=\frac{1}{Z}\prod_{c_i \in \mathcal{C}}\phi_i(c_i)=\frac{1}{Z}\exp[-\sum_{c_i \in \mathcal{C}}f_i(c_i)].
		\end{gather}
		\begin{figure}[t]
			\begin{center}
				\includegraphics[scale=0.06]{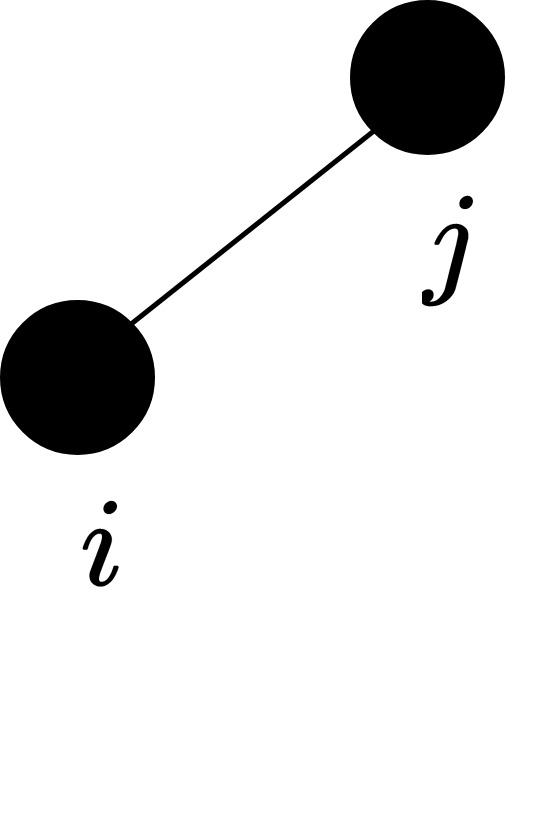}
				\caption{A clique of size 2 that contains nodes $i$ and $j$.}
				\label{fig:clique}
			\end{center}
		\end{figure} 	
		Then, for a clique with two members (Figure \ref{fig:clique}), the joint probability distribution is proportional to the exponential form
		\begin{gather}
			p_{ij} \propto \exp(-\zeta(i, j)),
		\end{gather}
		Where $\zeta$ is a function defined on the clique $\{i, j\}$. In this case, one can simply show that the marginal distributions $p_i$ (or $p_j$) is of the exponential form:
		\begin{gather}
			p_i \propto \exp(-\eta(i)),
		\end{gather}	
		because we can write
		\begin{gather}
			\begin{aligned}
				p_i & = \sum_{j=j_1}^{j_m}p_{ij}=\sum_{j=j_1}^{j_m}\exp(-\zeta(i, j))\\
				& = \exp(-\zeta(i, j_1))[1+ \frac{\exp(-\zeta(i, j_2))}{\exp(-\zeta(i, j_1))}+\dots+\frac{\exp(-\zeta(i, j_m))}{\exp(-\zeta(i, j_1))}],
			\end{aligned}
		\end{gather}
		where $\{j_1, \dots, j_m\}$ is the set of possible values for the random variable defined on the node $j$. Thus we know that the marginal distribution is of exponential form. The procedure can be followed in reverse, i.e., if the marginal distributions are of exponential form, then the joint probability distribution and clique distribution are of exponential form.
	\end{proof}
	The \textit{Hammersly-Clifford} theorem \cite{statsdict} provides the equivalency of a Gibbs Random Field and a Markov Random Field. Thus, according to lemma \ref{lemma:prop4} and Hammersly-Clifford theorem, if we can write the nodes probability distributions in the exponential form, then the joint probability can be written as a Markov Random Field. 
	
	Given that the node probability distributions are defined in exponential and they are constructed given the values of the random variables of the neighboring nodes, i.e., 
	\begin{gather}
		p(\mathbf{X}_i) \propto \exp(-\Delta_i)=\exp(-\eta(\mathbf{X}_i, \mathbf{X}_{\mathcal{N}_i})),
	\end{gather}
	one can show that the joint probability distribution is of Gibbs Random Field, based on Lemma \ref{lemma:prop4}, and based on Hammersly-Clifford theorem, the assigned probability distribution to the graph constitutes a Markov Random Field. Since the node probabilities are defined given the neighboring values, one can immediately conclude that two pairs of disconnected nodes are statistically independent.
\end{proof}

\subsection{Proof of Proposition 5}
\begin{lemma}\label{lem:1}
	Let $\mathcal{G} = (\mathcal{V}, \mathcal{E})$ be a connected simple graph, and $\mathcal{U}$ be an arbitrary maximal independent set of $\mathcal{G}$. For any node $i \in \mathcal{U}$, there is a node $j \in \mathcal{U}$ with the maximum geodesic distance, $d_\mathcal{G}(i, j)$ of $3$ in $\mathcal{G}$.
\end{lemma}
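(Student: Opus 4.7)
The plan is to read the statement as saying that for any $i \in \mathcal{U}$ (with $|\mathcal{U}| \geq 2$, which is the nondegenerate case), there exists $j \in \mathcal{U}\setminus\{i\}$ with $d_\mathcal{G}(i,j) \leq 3$. I would argue by contradiction, using connectedness of $\mathcal{G}$ to produce a witnessing shortest path from $i$ to the rest of $\mathcal{U}$, and then using maximality of $\mathcal{U}$ to force an element of $\mathcal{U}$ to sit near $i$ along that path.

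Suppose for contradiction that every $j \in \mathcal{U}\setminus\{i\}$ satisfies $d_\mathcal{G}(i,j) \geq 4$. Since $\mathcal{G}$ is connected and $|\mathcal{U}|\geq 2$, there is a shortest path $i = v_0, v_1, v_2, \ldots, v_k = j$ terminating in some $j \in \mathcal{U}\setminus\{i\}$, and our assumption forces $k \geq 4$. Along such a shortest path, each intermediate vertex satisfies $d_\mathcal{G}(i,v_\ell) = \ell$, so in particular $d_\mathcal{G}(i,v_2)=2$.

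The key step is to inspect $v_2$. By the contradiction hypothesis, $v_2 \notin \mathcal{U}$, since otherwise $v_2$ would already be an element of $\mathcal{U}\setminus\{i\}$ at distance $2$ from $i$. Because $\mathcal{U}$ is maximal, adding $v_2$ to $\mathcal{U}$ must destroy independence, so $v_2$ has at least one neighbor $u \in \mathcal{U}$. The triangle inequality gives $d_\mathcal{G}(i,u) \leq d_\mathcal{G}(i,v_2) + 1 = 3$, and $u \neq i$ since $i$ is not adjacent to $v_2$ (distance $2$). Thus $u \in \mathcal{U}\setminus\{i\}$ with $d_\mathcal{G}(i,u) \leq 3$, contradicting the standing assumption and completing the proof.

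The main obstacle is really conceptual rather than technical: one must notice that the right vertex to probe is $v_2$ (not $v_1$ or $v_3$), because maximality at $v_2$ produces an element of $\mathcal{U}$ within one more edge, giving a total distance of at most $3$. Any attempt at $v_1$ fails because its neighbor in $\mathcal{U}$ could just be $i$ itself, and probing $v_3$ wastes an edge. A minor bookkeeping point is to handle the degenerate case $|\mathcal{U}|=1$ (which can arise, e.g. when $\mathcal{G}=K_{1,n}$ and $\mathcal{U}$ is the center): the statement is vacuous there, and the lemma will only be invoked in Proposition \ref{prop5} in situations where a second element of $\mathcal{U}$ exists in the relevant connected component.
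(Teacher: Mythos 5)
Your proof is correct, and its engine is the same as the paper's: probe the vertex at distance $2$ from $i$, invoke maximality of $\mathcal{U}$ to find a neighbor of that vertex inside $\mathcal{U}$, and conclude a distance of at most $3$. Where you differ is in the framing. The paper fixes a path of length exactly $4$ from $i$ to another $j \in \mathcal{U}$, names its interior vertices $m, w, n$, and runs a three-way case analysis on which member of $\mathcal{U}$ the middle vertex $w$ is adjacent to (to $i$, to $j$, or to some third vertex $g$); the cases where $i$ and $j$ are farther apart are only asserted to work ``similarly'' when the corollary is proved. Your contradiction setup --- assume every other member of $\mathcal{U}$ is at distance at least $4$, take a shortest path to any one of them, and derive a contradiction at $v_2$ --- handles all path lengths uniformly in a single stroke, needs no case split on whom $v_2$'s neighbor in $\mathcal{U}$ happens to be (only that it is not $i$, which you justify via $d_\mathcal{G}(i,v_2)=2$), and so actually closes the ``similarly'' gap the paper leaves open. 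You also correctly flag the degenerate case $|\mathcal{U}|=1$ and the fact that the lemma's literal phrasing (``maximum geodesic distance of $3$'') should be read as $d_\mathcal{G}(i,j)\le 3$, which is the form in which Corollary~\ref{cor:1} and Proposition~\ref{prop5} actually use it. The only cost of your version is that it proves slightly less surface detail than the paper's case analysis (which exhibits the explicit length-$3$ paths), but that detail is not needed downstream.
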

\begin{proof}
	If $i,j \in \mathcal{U}$, then, since $i$ and $j$ are not connected, trivially we have $d_\mathcal{G}(i, j) > 1$.\\
	Now, let $i, j \in \mathcal{U}$, and there is a path of length $4$ between $i$ and $j$ (Figure \ref{fig:g1}). We can rapidly conclude that the nodes $m$ and $n$ do not belong to $\mathcal{U}$ since they are connected to two nodes of $\mathcal{U}$. If the node $w$ belongs to the maximal independent set $\mathcal{U}$ the proof is over since the geodesic distance between the node $i$ and $w$ would be $2$ (through $i \rightarrow m \rightarrow w$). Thus, let's suppose that $w$ does not belong to $\mathcal{U}$.
	
	\begin{figure}[!ht]
		\begin{center}
			\includegraphics[scale=0.15]{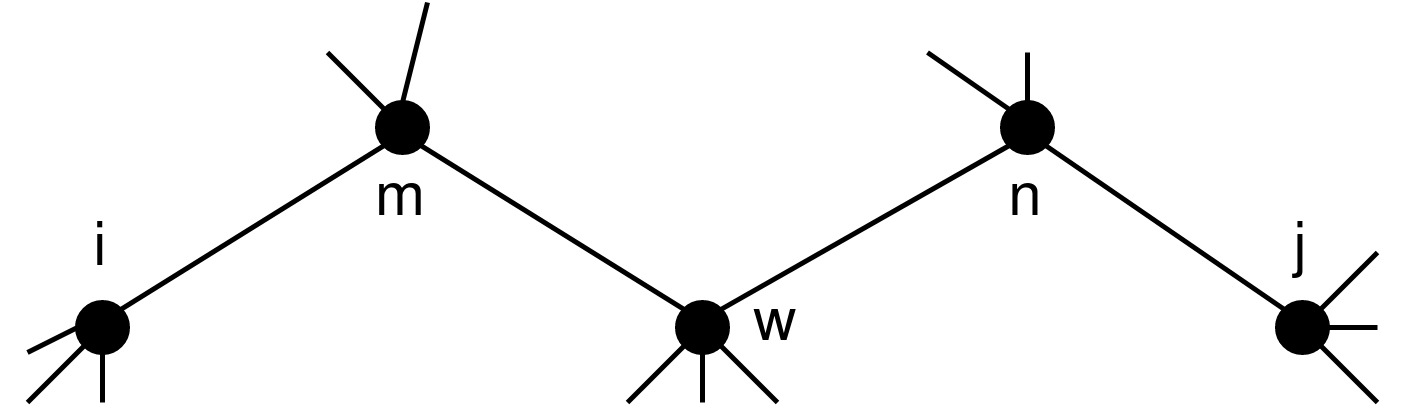}
			\caption{A segment of a graph where nodes $i$ and $j$ belong to the maximal independent set $\mathcal{U}$ and there is a path of length $4$ between them}
			\label{fig:g1}
		\end{center}
	\end{figure}
	
	If $w \notin \mathcal{U}$, it follows that $w$ is directly connected to a node in $\mathcal{U}$. In this case, three situations might happen:
	\begin{enumerate}[(i)]
		\item $w$ is connected to $i$ (Figure \ref{fig:g3}),
		\begin{figure}[!ht]
			\begin{center}
				\includegraphics[scale=0.15]{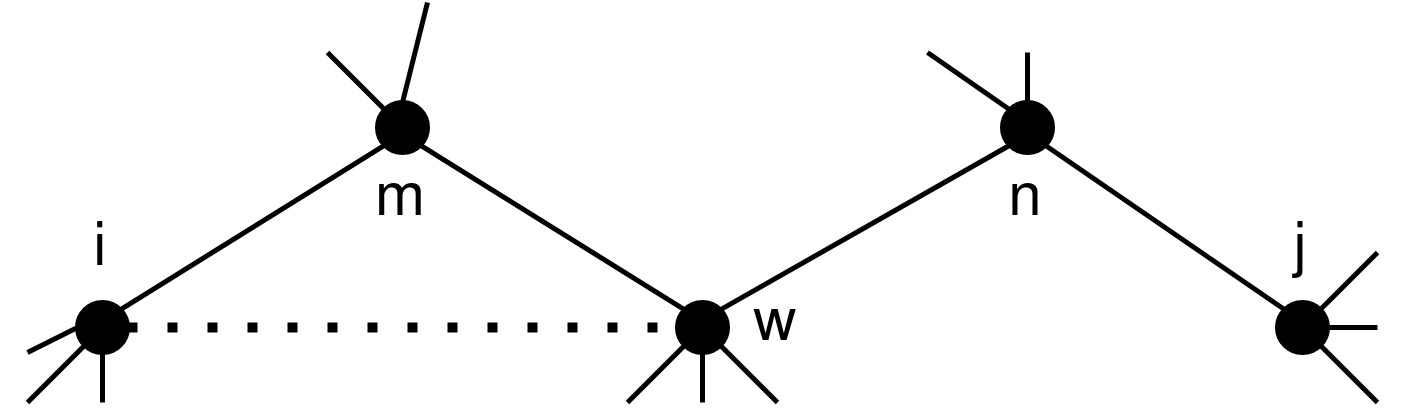}
				\caption{The case where $w$ is connected to $i$.}
				\label{fig:g3}
			\end{center}
		\end{figure}
		\begin{figure}[!ht]
			\begin{center}
				\includegraphics[scale=0.15]{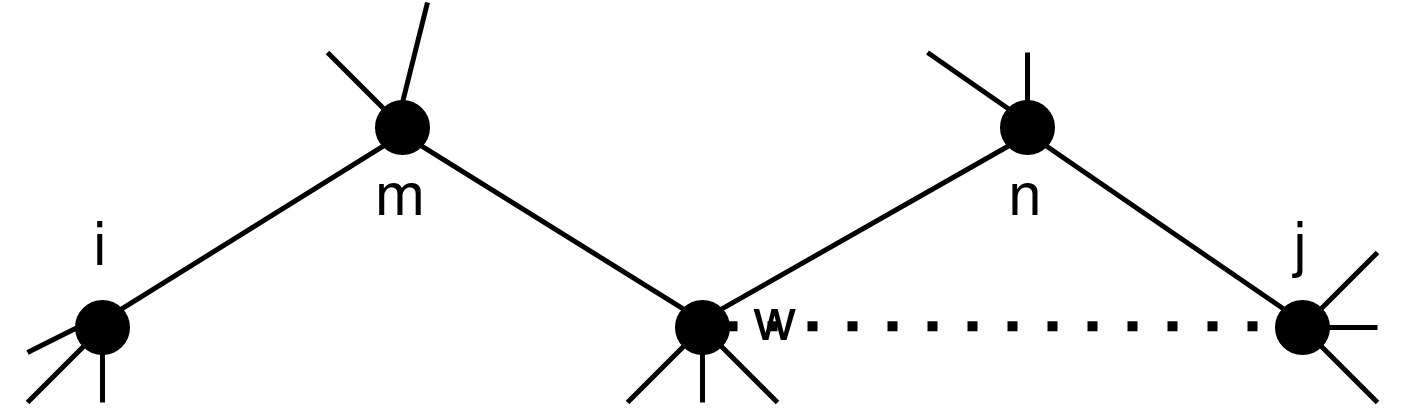}
				\caption{The case where $w$ is connected to $j$.}
				\label{fig:g4}
			\end{center}
		\end{figure}
		\begin{figure}[!ht]
			\begin{center}
				\includegraphics[scale=0.15]{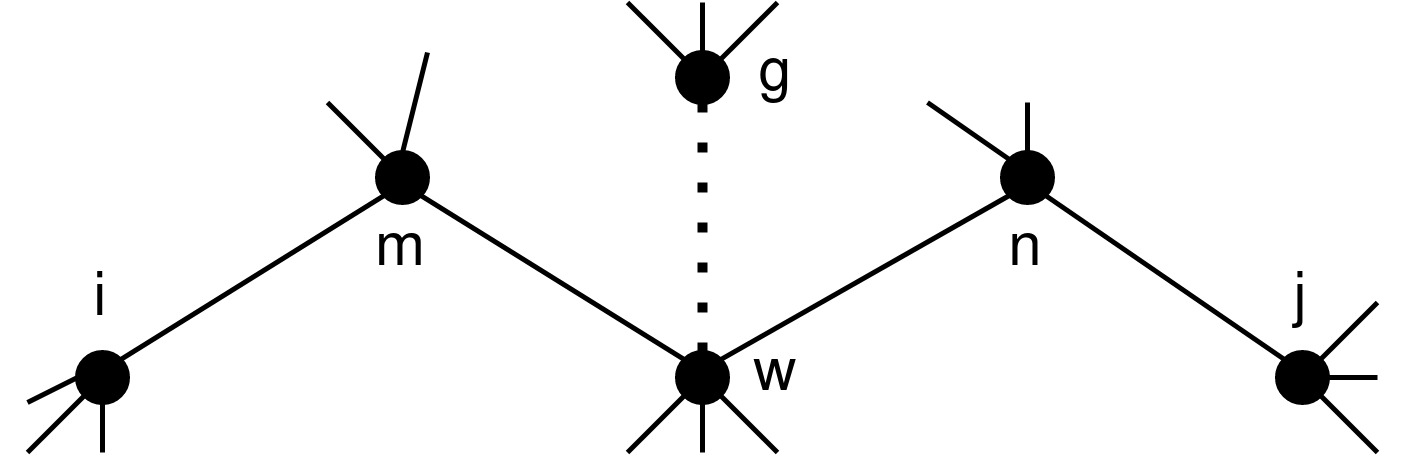}
				\caption{The case where $w$ is connected to another node $g \in \mathcal{U}$.}
				\label{fig:g2}
			\end{center}
		\end{figure}
		\item or, $w$ is connected to $j$ (Figure \ref{fig:g4}),
		\item or, $w$ is directly connected to another node $g \in \mathcal{U}$ (Figure \ref{fig:g2}).
	\end{enumerate}
	
	If case (i) happens, it means that the geodesic distance between $i$ and $j$ is $3$ (through $i \rightarrow w \rightarrow n \rightarrow j$) and the proof is over. If case (ii) occurs, then again, the geodesic distance between $i$ and $j$ is $3$ (through $i \rightarrow m \rightarrow w \rightarrow j$) and the proof is over. Finally, if case (iii) arises, then there is a node $g \in \mathcal{U}$ for which the geodesic distance between $i$ and $g$ is $3$ (through $i \rightarrow m \rightarrow w \rightarrow g$) and the proof is complete.\bigskip\\
	If none of the cases above happens, then $w$ must belong to the maximal independent set $\mathcal{U}$, otherwise the condition of maximality of $\mathcal{U}$ would be violated.
\end{proof}

From the above lemma, we can immediately conclude the following corollary.

\begin{corollary}\label{cor:1}
	Let $\mathcal{G} = (\mathcal{V}, \mathcal{E})$ be a connected simple graph, and $\mathcal{U}$ be an arbitrary maximal independent set of $\mathcal{G}$. For any two nodes $i, j \in \mathcal{U}$, either $d_\mathcal{G}(i, j) \leq 3$ or there is a subset $\mathcal{S} = \{s_1, \dots s_k\}$ of $\mathcal{U}$ such that $d_\mathcal{G}(i, s_1) \leq 3$, $d_\mathcal{G}(s_i, s_{i+1}) \leq 3$, $1 \leq i \leq k-1$, and $d_\mathcal{G}(s_k, j) \leq 3$.\bigskip
\end{corollary}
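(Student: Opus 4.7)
I will proceed by strong induction on $d := d_\mathcal{G}(i, j)$. If $d \le 3$, the first alternative of the corollary already holds (take $\mathcal{S} = \emptyset$), which is the base case. For the inductive step, assume $d \ge 4$ and that the corollary has been proved for every pair of vertices of $\mathcal{U}$ at geodesic distance strictly less than $d$.

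The crux is to produce a single intermediate vertex $s_1 \in \mathcal{U}$ with $d_\mathcal{G}(i, s_1) \le 3$ and $d_\mathcal{G}(s_1, j) < d$. Granted such an $s_1$, apply the inductive hypothesis to the pair $(s_1, j)$ to obtain either $d_\mathcal{G}(s_1, j) \le 3$ directly, or a chain $s_2, \ldots, s_k$ in $\mathcal{U}$ with consecutive hops of length at most $3$ ending at $j$. Either way, prepending $s_1$ yields the chain required by the corollary for $(i, j)$, so the induction closes.

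To construct $s_1$, I fix an arbitrary geodesic $i = v_0, v_1, v_2, \ldots, v_d = j$ in $\mathcal{G}$ (which exists since $\mathcal{G}$ is connected) and focus on $v_2$; note that $d \ge 4$ guarantees $v_2 \ne j$, and the shortest-path property gives $d_\mathcal{G}(i, v_2) = 2$ so $v_2 \ne i$ and $v_2$ is not adjacent to $i$. By the maximality of $\mathcal{U}$, either $v_2 \in \mathcal{U}$ — in which case set $s_1 := v_2$ and observe $d_\mathcal{G}(i, s_1) = 2$, $d_\mathcal{G}(s_1, j) = d - 2 < d$ — or some neighbour $u$ of $v_2$ lies in $\mathcal{U}$, in which case set $s_1 := u$. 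In this second case a brief check eliminates degenerate choices of $u$: $u \ne v_1$ because $v_1$ is adjacent to $i \in \mathcal{U}$, and the independence of $\mathcal{U}$ forbids $v_1 \in \mathcal{U}$; $u \ne i$ since $i$ is not adjacent to $v_2$ by the geodesic property above. The triangle inequality through the geodesic then yields $d_\mathcal{G}(i, s_1) \le d_\mathcal{G}(i, v_2) + 1 = 3$ and $d_\mathcal{G}(s_1, j) \le 1 + d_\mathcal{G}(v_2, j) = d - 1 < d$, completing the descent.

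\textbf{Expected obstacle.} The argument is essentially elementary — Lemma~1, or really just the maximality of $\mathcal{U}$, supplies the needed stepping stone two hops along a geodesic — so I do not foresee a deep obstacle. The one place requiring a careful write-up is the exclusion of $u \in \{i, v_1\}$ in the second case: without these exclusions the chosen $s_1$ could coincide with $i$ and destroy the strict distance decrease $d_\mathcal{G}(s_1, j) < d$ that drives the induction.
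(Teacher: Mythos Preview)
Your proof is correct and follows essentially the same route as the paper: both arguments pick a geodesic from $i$ to $j$, use maximality of $\mathcal{U}$ to locate a vertex of $\mathcal{U}$ within distance $1$ of the second geodesic vertex $v_2$, and then recurse/iterate along the path. The paper handles the base case $d=4$ via the case split of Lemma~1 and then waves at ``similarly'' for larger $d$; your version is cleaner in that it folds the case analysis into a single triangle-inequality estimate and makes the strong induction explicit, including the care needed to rule out $s_1 \in \{i, v_1\}$ so that the distance strictly drops.
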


\begin{proof}
	The case where the path length between $i$ and $j$ is $4$ is shown in the proof of Lemma \ref{lem:1} (Figure \ref{fig:g2}). It is shown that either the shortest path between $i$ and $j$ is at most $3$ or there is a subset $\{g\} \subset \mathcal{U}$ where $d_\mathcal{G}(i, g) \leq 3$ and $d_\mathcal{G}(g, j) \leq 3$. The proof for the cases where the path length between $i$ and $j$ is more than $4$ is carried out similarly.
\end{proof}

Based on Corollary \ref{cor:1}, we can present a theorem which indicates that the reconstruction function preserves the connectivity of the graph.

\begin{prop}
	For a simple graph $\mathcal{G} = (\mathcal{V}, \mathcal{E})$ with an associated adjacency matrix $\mathbf{A}$ and an arbitrary maximal independent set $\mathcal{U}$, the reconstruction function of Eq.(\ref{eq:adj_pooled}), preserves the connectivity of each connected component of the graph.
\end{prop}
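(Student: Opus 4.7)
The plan is to read off the pooled adjacency from its construction and then invoke the distance lemma and corollary proved earlier in this appendix. Concretely, entry $(i,j)$ of $\mathbf{A}^k_{\mathcal{U}}$ counts walks of length $k$ from $i$ to $j$ in $\mathcal{G}$ whose endpoints lie in $\mathcal{U}$. Hence, for $i\neq j$ both in $\mathcal{U}$, the $(i,j)$ entry of $\psi(\mathbf{A},\mathcal{U})$ equals $1$ precisely when there exists a walk of length $2$ or $3$ between $i$ and $j$ in $\mathcal{G}$, and equals $0$ otherwise; the Hadamard factor $(\mathbb{1}-\mathbf{I})$ only zeroes the diagonal. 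So I would first record the simple equivalence: the pooled graph contains the edge $\{i,j\}$ if and only if $d_{\mathcal{G}}(i,j)\in\{2,3\}$.

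Next, I would work one connected component $C$ of $\mathcal{G}$ at a time, since the reconstruction operates block-wise and components cannot introduce new edges between themselves. Let $\mathcal{U}_C=\mathcal{U}\cap C$; if $\mathcal{U}_C=\emptyset$ or is a singleton, the statement is trivial, so assume $|\mathcal{U}_C|\geq 2$. Observe that $\mathcal{U}_C$ is a maximal independent set of $C$: it is independent as a subset of $\mathcal{U}$, and if some $v\in C\setminus\mathcal{U}_C$ had no neighbor in $\mathcal{U}_C$, then since $v$'s neighbors all lie in $C$, $v$ would also have no neighbor in $\mathcal{U}$, contradicting maximality of $\mathcal{U}$ in $\mathcal{G}$.

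Now I would apply Corollary \ref{cor:1} to the connected component $C$ and its maximal independent set $\mathcal{U}_C$: for any two $i,j\in\mathcal{U}_C$ there is a sequence $i=s_0,s_1,\ldots,s_k,s_{k+1}=j$ in $\mathcal{U}_C$ with $d_{C}(s_t,s_{t+1})\leq 3$ for every $t$. Because $\mathcal{U}_C$ is independent, consecutive $s_t,s_{t+1}$ are non-adjacent in $C$, so in fact $d_{C}(s_t,s_{t+1})\in\{2,3\}$. By the equivalence established in the first step, each consecutive pair $\{s_t,s_{t+1}\}$ is an edge of the pooled graph, giving a path in the pooled graph from $i$ to $j$. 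Hence $\mathcal{U}_C$ is connected in the pooled graph, and the connectivity of the component is preserved.

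The main obstacle is really the careful bookkeeping rather than any hard mathematics: one must verify that restricting a maximal independent set to a connected component yields a maximal independent set of that component (so that Lemma \ref{lem:1} and Corollary \ref{cor:1} actually apply), and one must combine independence (forcing $d_{C}\geq 2$) with the corollary (providing $d_{C}\leq 3$) to land exactly in the interval where the $\mathbf{A}^2+\mathbf{A}^3$ reconstruction creates an edge. Once these two observations are in place, the proof is a one-line chaining argument.
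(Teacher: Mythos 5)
Your proposal is correct and follows essentially the same route as the paper's proof: both reduce to a single connected component, invoke the corollary guaranteeing a chain of nodes of $\mathcal{U}$ with consecutive geodesic distances at most $3$, and observe that the $\mathbf{A}^2+\mathbf{A}^3$ reconstruction links exactly such pairs. Your version is somewhat more careful on two points the paper leaves implicit --- that restricting a maximal independent set to a component yields a maximal independent set of that component, and the precise walk-counting criterion for when the pooled graph acquires an edge --- but these are refinements of the same argument rather than a different one.
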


\begin{proof}
	Without loss of generality, suppose that $\mathcal{G}$ has only one connected component. If $\mathcal{U}$ is a singleton (it contains only one vertex) then the resulting graph consists of only one vertex and is trivially connected. Now, suppose $\mathcal{U}$ contains $n$ nodes. According to Corollary \ref{cor:1}, for any two nodes in $\mathcal{U}$, there is a sequence of nodes in $\mathcal{U}$ for which the shortest path between each pair of consecutive nodes is at most $3$. Since the reconstruction function $\psi(A, \mathcal{U})$ connects any two nodes of $\mathcal{U}$ for which the shortest path length is at most $3$, it follows that there is path between any two arbitrary nodes in the reconstructed graph and hence, the resulting graph is connected.
\end{proof}
\subsection{Neural Execution of Maximum Weight Independent Set}
\label{subsec:ne_mwis}

In this section we present a neural implementation for the approximate solution of the maximum weight independent set problem. Initially, we briefly introduce the objective function of the MWIS problem. Given a graph $\mathcal{G} = (\mathcal{V}, \mathcal{E})$ and a weight vector $\mathbf{w} \in \mathbb{R}^{|\mathcal{V}|}$, MWIS problem can originally be formulated as an integer programming problem defined as \ref{eq:mwis}.

We can simply replace the first constraint of problem \ref{eq:mwis} with $z_iz_j = 0\ ;\ (i, j) \in \mathcal{E}$. The Lagrangian of this optimization problem can be given as,

\begin{gather}
	\label{eq:2}
	\begin{aligned}
		L(\mathbf{z}, \mathbf{\Lambda}) =& \sum_{i=1}^{|\mathcal{V}|} w_iz_i - \sum_{(i, j) \in \mathcal{E}} \lambda_{ij}z_iz_j,\; \ z_i \in \{0, 1\}.
	\end{aligned}
\end{gather}

As $ z_i \in \{0, 1\}$ we can consider $z_i$ as a Bernoulli random variable with parameter $p_i$. Computing the expected value of Eq.(\ref{eq:2}), gives

\begin{gather}
	\label{eq:3}
	\mathbb{E_{\mathbf{z} \sim \textrm{Ber}(\mathbf{p})}}[L(\mathbf{z}, \mathbf{\Lambda})] = \sum_{i=1}^{|\mathcal{V}|} w_ip_i - \sum_{(i, j) \in \mathcal{E}} \lambda_{ij}p_ip_j.
\end{gather}

Setting $\lambda_{ij} = 1$, Eq.(\ref{eq:3}) turns into the quadratic polynomial formulation of MWIS problem as proven in \cite{quadraticform}:	
\begin{maxi}|l|
	{\mathbf{z}}{H(\mathbf{z}) = \sum_{i=1}^{|\mathcal{V}|} w_iz_i - \sum_{(i, j) \in \mathcal{E}}z_iz_j}{}{}
	\label{eq:4}
\end{maxi}		
where $\mathbf{z}$ is defined over the hypercube $[0, 1]^n$. We employ $H(\mathbf{z})$ as the objective function of a graph neural network which is supposed to learn the parameters of the Bernoulli distribution.

According to the probabilistic method \cite{probabilisticmethod, erdos1959graph}, as the objective function defined in Eq.(\ref{eq:4}) is the expected value of $f(\mathbf{z})$ in problem \ref{eq:mwis}, we have

\begin{gather}
	\label{eq:5}
	p\left(f(\mathbf{z}) \geq \mathbb{E}[f(\mathbf{z})] = H(\mathbf{z})\right) > 0.
\end{gather}

This assures that there exists a valid solution in the distribution generated by the network which has no less objective value of the original problem than the objective value of the network. Similar to \cite{erdos}, we retrieve this solution using derandomization method of conditional expectation \cite{probabilisticmethod}, elaborated in Section(\ref{subsubsec:ce}).

Since we intend to define a loss function for the network, we simply convert the maximization problem of \ref{eq:4} to a minimization problem. This yields the loss function of the network as

\begin{gather}
	\label{eq:a6}
	\mathcal{L}(\mathcal{G};\theta) = \gamma - \sum_{i=1}^{|\mathcal{V}|} w_iz_i + \sum_{(i, j) \in \mathcal{E}}z_iz_j,
\end{gather}

where $\theta$ denotes the parameters of the network and $\gamma$ is an upper bound for $H(\mathbf{z})$ to ensure positive values for $\mathcal{L}(\mathcal{G};\theta)$. A trivial choice for $\gamma$ is the sum of the weights of all nodes, $\gamma = \sum_{i}w_i$.

\subsubsection{Method of Conditional Expectation}
\label{subsubsec:ce}

Since the probabilistic method is nonconstructive, which only proves the existence of a mathematical object with the desired combinatorial structure, derandomization methods are known to provide deterministic algorithms that guarantee to construct such desired structures in a precise and efficient manner \cite{raghavan}. Method of conditional expectation is one of such derandomization methods, which we use in this paper to retrieve the valid solution of maximum weight independent set problem. More precisely, we use the method of conditional expectation to sequentially select the nodes which belong to the maximum weight independent set.

Briefly, the method of conditional expectation works as follows; given a threshold for the expected value of a random variable, select the outcome which results in an expected value not greater than the threshold. Since the conditional expectation never increases, this ensures that we always reach to a solution which is not worse than the threshold of the expected value. For the maximum weight independent set problem the threshold is defined as Eq.(\ref{eq:6}) which is the loss of the network and the whole procedure can be translated as follows; at each step we select a node whose selection and removal of its neighbors leads to a lower expected value than the given threshold. This procedure is given at Table(\ref{tab:1}). 
%In \textbf{Algorithm 1}, the statement $\gamma - \mathbb{E}_\mathcal{D}[f(\mathbf{z})| v \in selected ,\ \mathcal{N}_v \in rejected]$ is in fact the loss function of Eq.(\ref{eq:6}) conditioned on selection of node $v$ and removal of its neighbors $\mathcal{N}_v$. Furthermore, the statement ``Update $\mathcal{D}$" derandomizes the corresponding Bernoulli distributions of the selected node and its neighbors by setting their probability values to one and zero, respectively.

\subsection{Statistics of Datasets}
The statistics of the datasets that are used in graph classification tasks and the combinatorial optimization problem of maximum independent set are presented in Table \ref{tab:gc_dataset} and Table \ref{tab:mis_dataset}, respectively.
\begin{table*}[ht]
	\centering
	\caption{Datasets statistics for the supervised graph classification task.}
	\resizebox{\textwidth}{!}{
		\begin{tabular}{lcccccc}
			\toprule
			Name & \#Graphs & Avg. \#Nodes & Avg. \#Edges & \#Classes & \#Node Features & Node Label\\
			\midrule
			D\&D\cite{dd1prot, dd2} & $1178$ & $284.32$ & $715.66$ & $2$ & - & +\\
			FRANKENSTEIN\cite{frankenstein} & $4337$ & $16.90$ & $17.88$ & $2$ & $780$ & - \\
			MUTAG\cite{mutag1, mutag2} & $188$ & $17.93$ & $19.79$ & $2$ & - & +\\
			Mutagenicity\cite{mutagenicity1, mutagenicity2} & $4337$ & $30.32$ & $30.77$ & $2$ & - & +\\
			PROTEINS\cite{dd1prot, enzprot} & $1113$ & $39.06$ & $72.82$ & $2$ & $1$ & + \\
			COLLAB\cite{imdbcollab} & $5000$ & $74.49$ & $2457.78$ & $3$ & - & -\\
			IMDB-BINARY\cite{imdbcollab} & $1000$ & $19.77$ & $96.53$ & $2$ & - & -\\
			IMDB-MULTI\cite{imdbcollab} & $1500$ & $13.00$ & $65.94$ & $3$ & - & -\\
			\bottomrule				
	\end{tabular}}
	\label{tab:gc_dataset}
\end{table*}
\begin{table}[ht]
	\centering
	\caption{Datasets statistics for the unsupervised combinatorial optimization problem of maximum independent set.}
	\resizebox{0.4\textwidth}{!}{
		\begin{tabular}{lcc}
			\toprule
			Name & \#Nodes & \#Edges \\
			\midrule
			Cora\cite{citationnetworks} & $2708$ & $5429$ \\
			Citeseer\cite{citationnetworks} & $3327$ & $4732$ \\
			PubMed\cite{citationnetworks} & $19717$ & $44338$ \\
			\bottomrule
	\end{tabular}}
	\label{tab:mis_dataset}
\end{table}

\subsection{Experimental Settings}

The trade-off between the cross-entropy loss for classification and the MEWISPool’s loss are controlled via a regularization factor which is set to $0.01$ for all the graph classification experiments. In this section we investigate the effect of the regularization factor on the performance of the model in terms of validation accuracy for model selection. The experiments are conducted on the \textbf{IMDB-BINARY} dataset. We sweep through the values in $\{1,0.1,0.01,0.001,0.0001\}$ and record the validation accuracy. The result is illustrated in Figure \ref{fig:reg-fac}. Note that the values for the regularization factor are represented in log-scale in the basis of $10$.  As it is shown, the best validation accuracy is achieved by setting the regularization factor to $0.01$.
\begin{figure}[!ht]
	\begin{center}
		\includegraphics[scale=0.15]{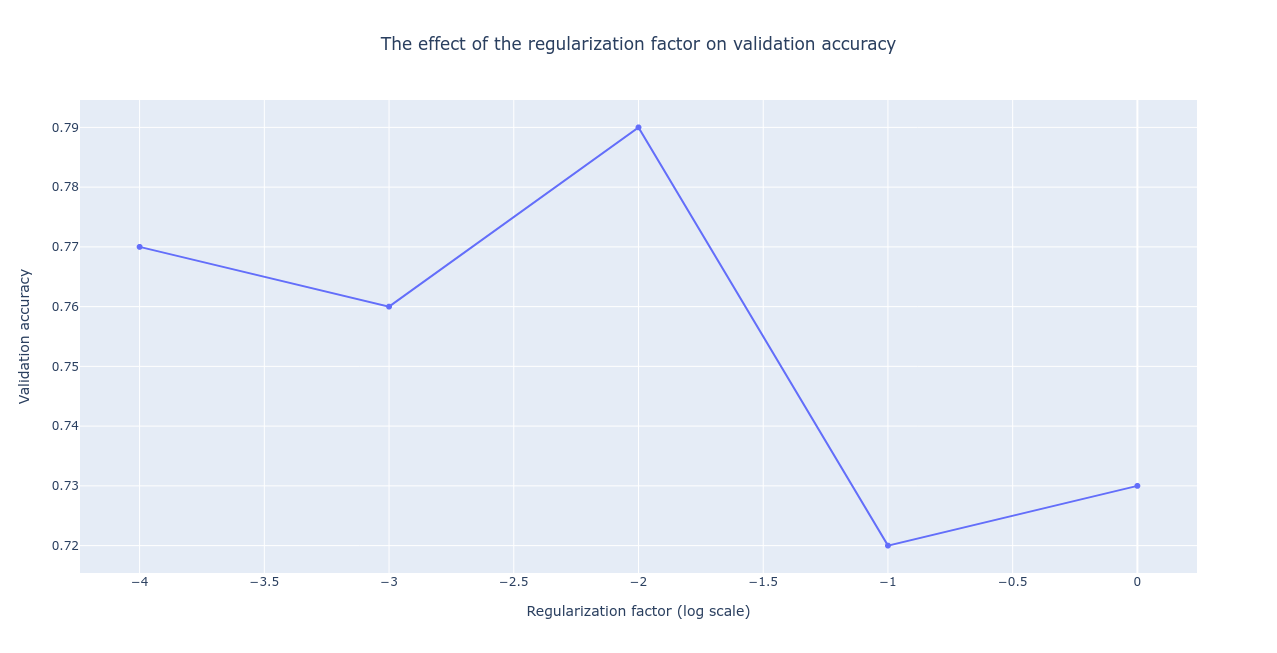}
		\caption{The effect of regularization factor on the validation accuracy. As the figure shows, the best accuracy is achieved for the regularization factor set to $0.001$.}
		\label{fig:reg-fac}
	\end{center}
\end{figure}

Furthermore, we depict the learning curve of the model in terms of separate losses for training. Figure \ref{fig:learn-curve} demonstrates the learning curves corresponding to the cross-entropy loss for classification, MEWISPool loss, and the total loss with the regularization factor taken into account.
\begin{figure}[!ht]
	\begin{center}
		\includegraphics[scale=0.15]{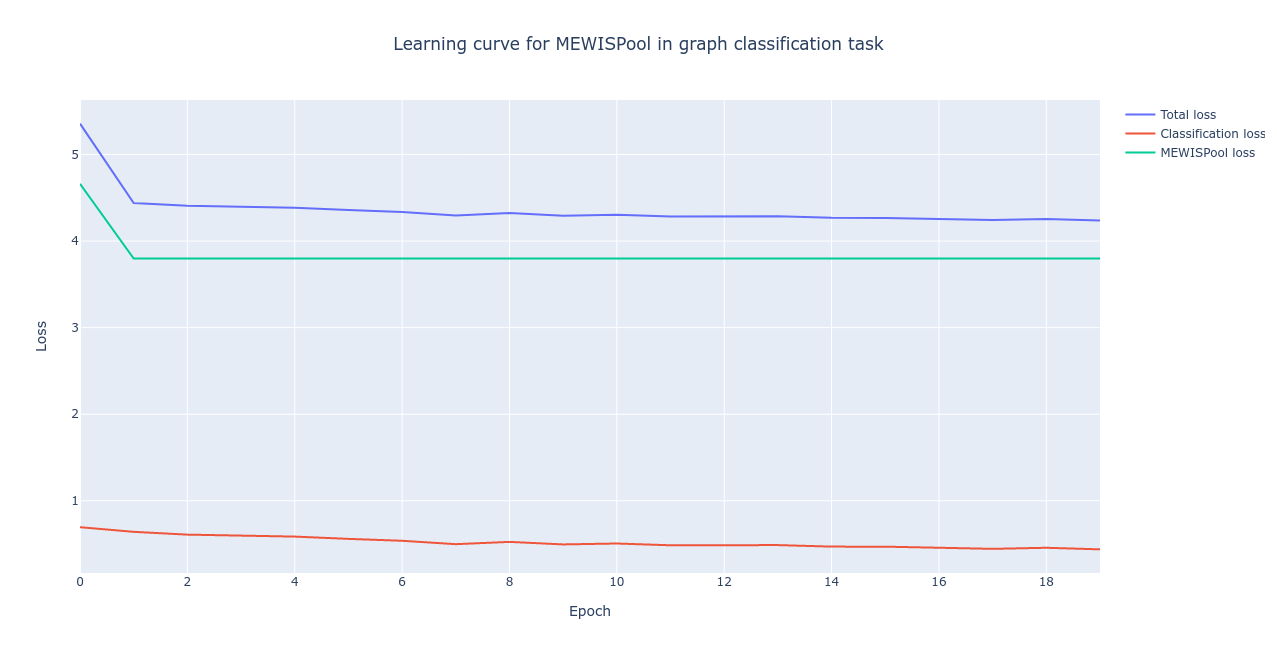}
		\caption{The learning curves corresponding to the cross-entropy loss, MEWISPool loss, and the total loss.}
		\label{fig:learn-curve}
	\end{center}
\end{figure}
The architecture used for the graph classification task is shown in Figure \ref{fig:arch}. In this architecture the MEWISPool consists of three graph convolutional layers. All the convolutional layers are graph isomorphism networks.
\begin{figure}[!ht]
	\begin{center}
		\includegraphics[scale=0.15]{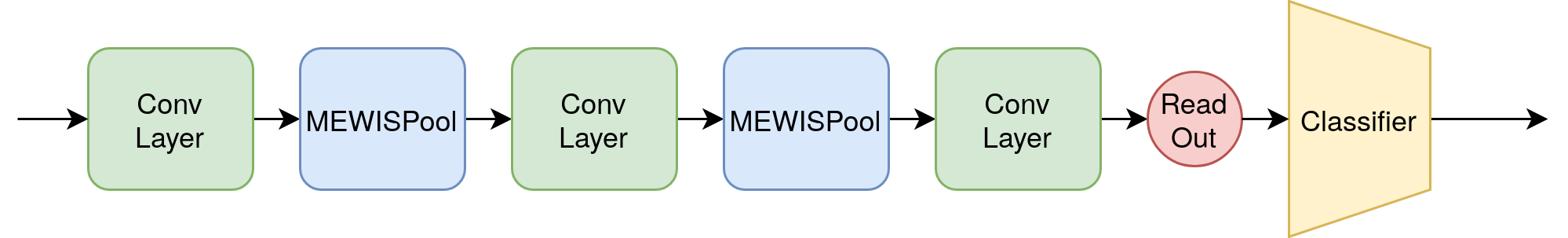}
		\caption{The architecture used for the graph classification task.}
		\label{fig:arch}
	\end{center}
\end{figure}

\end{document}